\documentclass[twoside,11pt,tablecaptionbottom]{jmlr}
\usepackage{booktabs}
\usepackage{array}
\usepackage{multirow}
\usepackage{microtype}
\usepackage{bm}
\usepackage{lastpage}
\usepackage{enumitem}
\usepackage{placeins}
\hypersetup{hidelinks}
\setlist[itemize]{leftmargin=1.6em,itemsep=0.25em,topsep=0.3em}
\setlist[enumerate]{leftmargin=1.8em,itemsep=0.25em,topsep=0.3em}

\jmlryear{2026}
\jmlrpages{1--\pageref{LastPage}}
\makeatletter
\renewcommand*{\@titlefoot}{}
\makeatother
\title[Settling: Equilibrium Inference]{Settling: Equilibrium Inference for Non-Convex Validity Sets}

\author[Saad Saoud]{%
\Name{Lyes Saad Saoud}\\
\addr Independent Researcher,\\
Chicago, IL, USA.}
\begin{document}
\maketitle

\begin{abstract}
Many learning systems return a single point estimate even when admissible outputs form disconnected or non-convex sets. Under squared loss, an ambiguous conditional distribution can therefore have a Bayes-optimal conditional mean that is invalid. We formalize this failure as \emph{conditional mean collapse} and introduce \emph{Settling}, an equilibrium-based inference operator that separates proposal generation, consistency evaluation, and test-time equilibrium selection. The operator treats a mean-seeking proposal as an initialization and refines it toward a locally stable configuration; conditional on initialization, refinement is deterministic. We establish exact-gradient descent, local convergence, and an inexact-gradient robustness condition relevant to learned consistency critics. In a reproducible 100-context geometric diagnostic, the mean-seeking baseline succeeds in 0/100 contexts, stochastic denoising in 100/100, and Settling in 99/100 while producing substantially lower trajectory roughness. A 1,200-run sensitivity study yields 97--100\% success across obstacle-jitter ranges up to 0.20 and 94--100\% across one-time initialization perturbations from $0.05$ to $0.50$. Cross-domain panels remain mechanism illustrations; learned high-dimensional validation remains an open empirical test.
\end{abstract}

\begin{keywords}
equilibrium inference, conditional mean collapse, non-convex validity, test-time computation, structured prediction
\end{keywords}

\section{Introduction}\label{sec:introduction}
Modern machine-learning systems frequently compress several plausible interpretations into a single representation or decision. Attention and other convex aggregation mechanisms are highly effective when the relevant solution geometry is approximately unimodal or smoothly varying, but a precise failure can arise when mutually valid alternatives occupy a non-convex output set: their average need not itself be valid. This paper studies that inference-level mismatch rather than arguing against attention as a general representation mechanism.

\begin{figure}[t]
\centering
\includegraphics[width=0.98\linewidth]{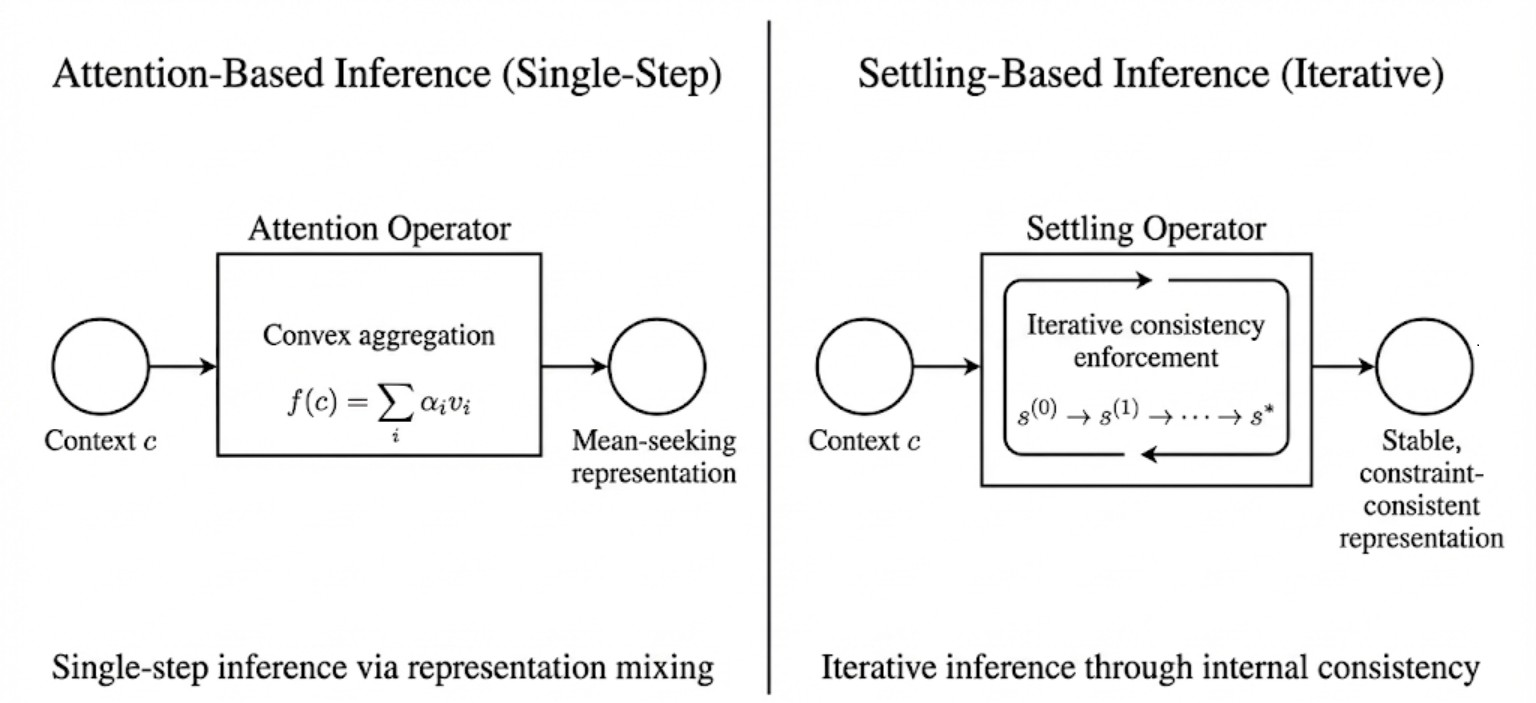}
\caption{Original conceptual comparison retained from the manuscript. Mean-seeking inference produces a single aggregate in one feed-forward step, whereas Settling treats inference as iterative internal consistency enforcement followed by equilibrium selection.}
\label{fig:conceptual-comparison}
\end{figure}

Under squared loss, the Bayes-optimal point predictor is the conditional expectation. If valid outputs form a disconnected or non-convex set, that conditional expectation can lie outside the admissible set even when every mode in the conditional distribution is valid. We call this operator-level failure \emph{conditional mean collapse}. The trajectory example in Fig.~\ref{fig:mean-collapse} makes the geometry explicit: two designated A-to-B alternatives are collision free, while their arithmetic mean passes through the obstacle region.

\begin{figure}[t]
\centering
\includegraphics[width=0.96\linewidth]{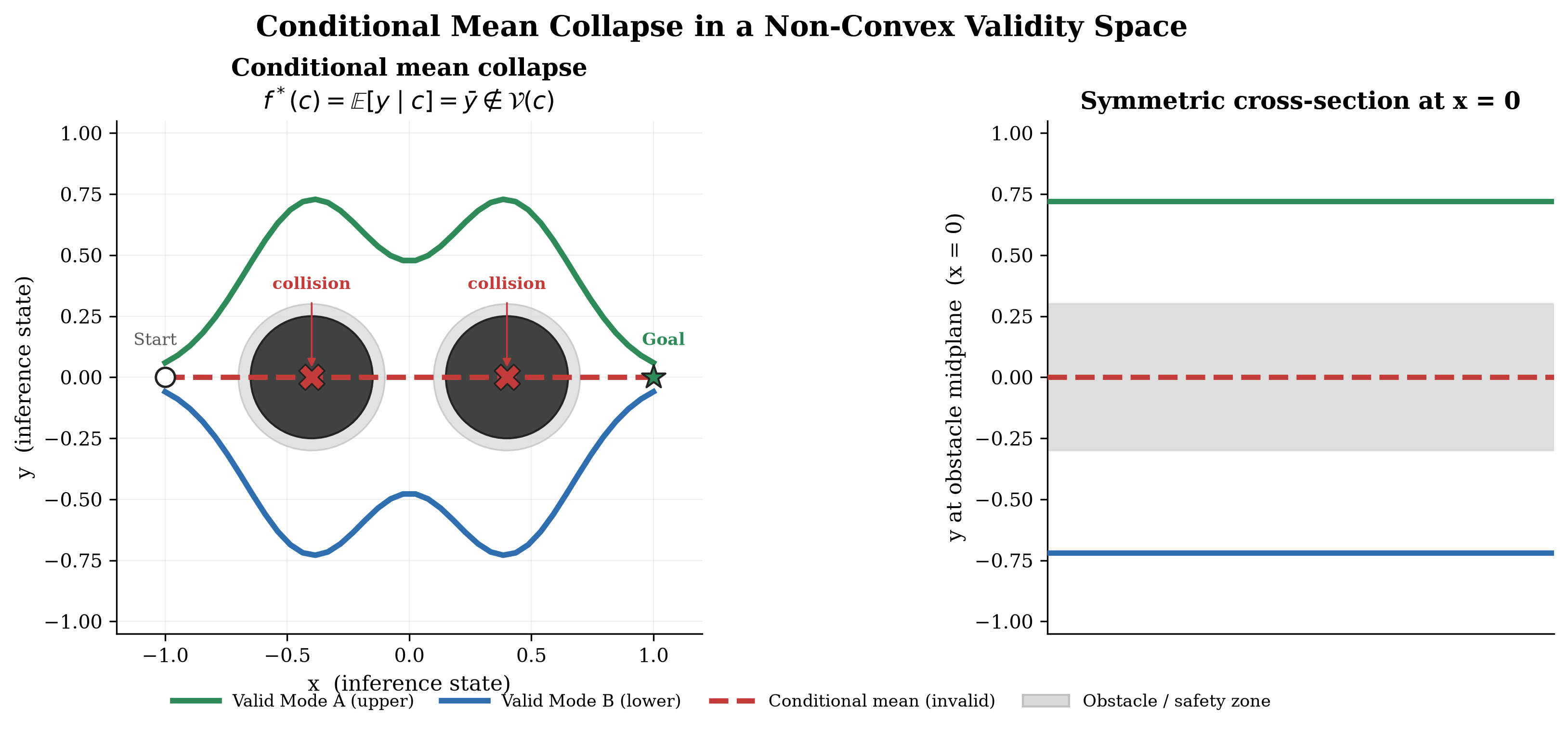}
\caption{Conditional mean collapse in the reproducible geometric diagnostic. Two designated collision-free A-to-B trajectories are individually valid, while their arithmetic mean intersects the obstacles. The result is binary under the stated collision rule: either the complete trajectory clears both obstacles or it does not.}
\label{fig:mean-collapse}
\end{figure}

We introduce \emph{Settling} as a complementary inference paradigm. Rather than treating a single-step aggregate as the final answer, Settling uses an amortized proposal only to initialize a dynamical process. A consistency landscape then rejects invalid configurations and drives the state toward a locally stable equilibrium. The intended distinction is procedural: the contribution is not a new optimizer, a new density model, or a claim that all attention-based systems must return invalid means. Instead, we identify a precise setting in which mean-seeking point inference is geometrically mismatched and study an explicit equilibrium-selection operator for that setting.

The paper makes four contributions. First, it formalizes conditional mean collapse for squared-loss point estimation in non-convex validity sets and states the precise conditions under which a Bayes-optimal point estimate is invalid. Second, it defines Settling as an inference-operator decomposition, consisting of proposal generation, consistency evaluation, and equilibrium selection, and establishes local descent and convergence properties under explicit smoothness and basin assumptions. Third, it separates this operator interpretation from energy-based density modeling, stochastic diffusion, deep equilibrium networks, structured energy prediction, and classical trajectory optimization without claiming novelty for gradient descent or fixed-point computation themselves. Fourth, it provides a fully reproducible 100-context geometric mechanism study, along with a 1,200-run robustness sweep over context perturbations and symmetry-breaking amplitudes, including source code, publication figures, quantitative metrics, and an inference animation, while explicitly distinguishing reproducible evidence from analytical or legacy cross-domain illustrations.

\paragraph{Validation scope.}
The analytic study is designed as an identification experiment for the inference mechanism: proposal geometry, validity, and the consistency landscape are all known, so failure or recovery can be attributed to the inference operator rather than to representation error, training instability, or hidden data effects. Section~\ref{sec:training} specifies a compatible learnable realization, while Section~\ref{sec:theory_settling} gives local conditions under which approximate consistency gradients preserve descent. The analytic setting is therefore a controlled unit test of the mechanism, not a surrogate claim that learned high-dimensional deployment has already been established.

\paragraph{Conditions for learned transfer.}
The theory identifies three local conditions that a future learned realization would need to satisfy for the same mechanism to be expected: (i) the proposal enters a recoverable neighborhood of a valid basin or an unstable compromise between basins; (ii) the learned consistency gradient remains sufficiently aligned with a validity-improving direction; and (iii) the refinement budget reaches a stable state at acceptable test-time cost. These conditions are not asserted to hold for arbitrary learned models and are not used as empirical evidence in this paper. Their purpose is to separate the operator-level mechanism established here from the distinct representation-learning problem that a learned deployment must solve. Lemma~\ref{lem:inexact-gradient} formalizes condition (ii) for bounded relative gradient error.

The evaluation is deliberately controlled. The purpose is not leaderboard performance or a claim of state-of-the-art planning, vision-language reasoning, or sensor fusion. The purpose is to isolate the geometry of ambiguity and ask whether an inference operator can reject an invalid aggregate and reach a constraint-consistent equilibrium. This distinction is especially important for interpreting the geometric success rates: they are finite-sample outcomes under the declared binary A-to-B criterion, not universal safety guarantees. The current released implementation obtains 99/100 collision-free Settling trajectories in the nominal 100-context evaluation described below.

\section{Failure of Mean-Seeking Inference}
\label{sec:failure}

We analyze a structural limitation shared by a broad class of contemporary inference operators when applied in non-convex solution spaces.
Our focus is deliberately narrow: we isolate the failure to the \emph{inference operator itself}, independent of model capacity, training data, architectural depth, or optimization procedure.
We show that standard point-estimation objectives induce mean-seeking behavior under ambiguity, which leads to invalid or infeasible predictions whenever the set of valid solutions is non-convex or disconnected.
We refer to this operator-level failure mode as \emph{conditional mean collapse}.

\subsection{Optimality of the Conditional Mean}

Consider the task of learning a mapping from a context space $\mathcal{C}$ to a continuous target space $\mathcal{Y} \subseteq \mathbb{R}^d$.
Let $f_\theta : \mathcal{C} \rightarrow \mathcal{Y}$ be a parametric predictor trained under the expected $L_2$ risk
\begin{equation}
\mathcal{L}(\theta) = \mathbb{E}_{p(c,y)} \left[ \| y - f_\theta(c) \|^2_2 \right].
\end{equation}
A classical result in statistical decision theory states that the Bayes-optimal predictor minimizing this risk is the conditional expectation~\cite{bishop2006pattern}
\begin{equation}
    f^*(c) = \mathbb{E}_{p(y|c)}[y]
    = \int_{\mathcal{Y}} y \, p(y|c) \, dy.
    \label{eq:cond_mean}
\end{equation}

Squared loss provides the cleanest formal case because its Bayes estimator is uniquely the conditional mean. Other point-estimation objectives can exhibit related failures, but their Bayes estimators need not equal the arithmetic mean in every multimodal setting. We therefore use squared loss for the formal theorem and treat broader central-estimator behavior as a motivation rather than a universal claim. The structural issue is the mismatch between a single convexly aggregated estimate and a non-convex validity set.

We formalize this limitation as a structural property of point-estimation inference.

\begin{theorem}[Conditional Mean Collapse under squared loss]
\label{thm:loss_change_not_enough}
Assume there exists a context $c$ and two valid outputs $y_1,y_2\in\mathcal{V}(c)$ such that
$\bar y=\lambda y_1+(1-\lambda)y_2\notin\mathcal{V}(c)$ for some $\lambda\in(0,1)$.
For the ambiguous conditional distribution
$p(y\mid c)=\lambda\delta(y-y_1)+(1-\lambda)\delta(y-y_2)$,
the Bayes-optimal point predictor under squared loss is the conditional mean $\bar y$ and is therefore invalid.
Hence, increasing model capacity without changing the mean-seeking inference objective cannot remove this failure for the stated context.
\end{theorem}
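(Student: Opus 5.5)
The plan is a direct computation of the conditional risk at the fixed context $c$, followed by a pointwise reduction argument for the capacity claim.

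\textbf{Step 1: minimize the conditional risk.} Fix the context $c$ from the hypothesis. Under the two-atom conditional distribution, the conditional risk of any candidate output $z\in\mathbb{R}^d$ is
\begin{equation*}
R_c(z)=\lambda\|y_1-z\|_2^2+(1-\lambda)\|y_2-z\|_2^2 .
\end{equation*}
Expanding the squares and completing the square around $\bar y=\lambda y_1+(1-\lambda)y_2$ should give the bias--variance identity
\begin{equation*}
R_c(z)=\|z-\bar y\|_2^2+\lambda(1-\lambda)\|y_1-y_2\|_2^2 .
\end{equation*}
The cross term vanishes because $\lambda(y_1-\bar y)+(1-\lambda)(y_2-\bar y)=0$. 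Since the second term does not depend on $z$, $R_c$ is strictly convex with unique minimizer $z=\bar y$. This recovers \eqref{eq:cond_mean} in this special case without needing densities, which matters because $p(y\mid c)$ is a mixture of Dirac masses.

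\textbf{Step 2: invalidity.} By hypothesis $\bar y\notin\mathcal{V}(c)$, so the Bayes-optimal prediction at $c$ is invalid.

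\textbf{Step 3: the capacity claim (the main obstacle).} Here I must make precise what ``increasing capacity'' means.
\begin{itemize}
\item The total risk $\mathcal{L}(\theta)$ decomposes as an integral over contexts of the conditional risk $R_c(f_\theta(c))$. So any predictor minimizing the population risk must minimize $R_c$ at $c$, provided $c$ has positive marginal probability, or is understood almost surely or in the limit of a sufficiently rich class.
\item A higher-capacity class can only bring $f_\theta$ closer to the unrestricted Bayes predictor $f^*$. Its output at $c$ is therefore driven toward the unique minimizer $\bar y$.
\item Because the minimizer is unique, no risk-minimizing predictor, of any capacity, attains a valid output at $c$ without paying strictly positive excess risk $\|z-\bar y\|_2^2>0$.
\end{itemize}
I would state the remaining caveat explicitly. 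If $c$ has zero marginal mass, the conclusion holds for a version of the conditional expectation, or under a continuity assumption on $f$. This is the interpretive rather than technical part of the argument.
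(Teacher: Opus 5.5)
Your proposal is correct and follows the paper's argument: the paper also writes the conditional risk $R(\hat y\mid c)$ at the fixed context, obtains $\bar y$ as its unique minimizer (by setting the gradient to zero rather than completing the square), invokes $\bar y\notin\mathcal V(c)$, and disposes of capacity in one sentence on the grounds that the Bayes solution is unchanged. Your bias--variance identity makes global optimality and the excess risk $\|z-\bar y\|_2^2$ explicit without a separate convexity remark, and your positive-marginal-mass (or almost-sure/version) caveat sharpens the paper's one-line capacity claim rather than departing from it.
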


\noindent\textit{Proof provided in Appendix~\ref{app:proofs}.}

Set-valued, mixture-valued, or explicitly mode-seeking predictors can preserve multiple alternatives and are not ruled out by Theorem~\ref{thm:loss_change_not_enough}. Our claim is narrower: a single squared-loss point estimate can be Bayes-optimal while lying outside a non-convex valid set.

\subsection{Topological Mismatch Under Ambiguity}

Let $\mathcal{V}(c)\subseteq\mathcal{Y}$ denote the externally defined set of admissible outputs for context $c$. We do not identify validity with the support of the observed conditional distribution: valid outputs may be unobserved, and observed outputs may be noisy. The relevant failure occurs when the conditional distribution places mass on valid alternatives whose convex combination lies outside $\mathcal{V}(c)$.

Consider the symmetric two-mode case
\begin{equation}
    p(y|c) = \tfrac{1}{2}\delta(y - \mu_1) + \tfrac{1}{2}\delta(y - \mu_2),
    \qquad \mu_1,\mu_2\in\mathcal{V}(c).
\end{equation}
Under squared loss, the Bayes-optimal point estimate is
\begin{equation}
f^*(c) = \tfrac{1}{2}(\mu_1 + \mu_2).
\end{equation}
If the segment between $\mu_1$ and $\mu_2$ crosses a forbidden region, then $f^*(c)\notin\mathcal{V}(c)$. Perfect symmetry is not required for the general theorem: for unequal mode weights, the squared-loss Bayes estimator is the corresponding convex interpolation, and invalidity follows whenever that interpolation leaves the admissible set. Thus the core issue is not symmetry itself but the mismatch between a point-estimation statistic and non-convex validity geometry.

\subsection{Convex Aggregation in a Restricted Attention Setting}

The conditional-mean result is objective-level and does not imply that every Transformer or attention-based system must return an invalid average. A narrower geometric observation is nevertheless useful. For a standard scaled dot-product attention head,
\begin{equation}
    h_i = \sum_{j=1}^T \mathrm{softmax}(q_i^\top k_j)_{ij} \, v_j,
    \qquad
    h_i \in \mathrm{Conv}(\{v_j\}),
\end{equation}
so the head output is a convex combination of its value vectors. If those vectors directly parameterize mutually valid alternatives and the downstream decision is a single-step mean-seeking map, the aggregate can fall in a convex-hull region that is invalid in output space. This is the restricted attention-like setting represented by the analytical mean baseline in Evidence~I.

This observation should not be generalized to complete Transformer architectures without qualification. Multi-layer nonlinear transformations, discrete decoding, mixture heads, search, or explicit constraint modules can map an internal convex combination to a non-convex output set and can preserve or recover multimodality. Our claim is therefore not that attention is intrinsically incapable of non-convex reasoning. The claim is that a single convex aggregate used as a terminal point estimate provides no intrinsic mechanism for rejecting an invalid compromise. Settling addresses precisely that restricted operator-level gap by making the aggregate revisable at inference time.

\section{Settling as an Inference Paradigm}
\label{sec:settling_paradigm}

To address the structural limitations of mean-seeking inference, we reformulate neural inference as a \emph{dynamical resolution process} rather than a single-step functional evaluation.
Instead of producing predictions through feed-forward aggregation, \emph{Settling} defines inference as the evolution of an internal representation toward a stable, constraint-consistent equilibrium.
In this formulation, inference is not the computation of an output statistic, but the resolution of ambiguity through internally consistent dynamics.

This perspective differs from single-pass point prediction at the level of the inference operator.
Many common point-estimation pipelines terminate after producing a single context-conditioned state, whether or not that state lies in an explicitly valid region.
Settling instead treats that state as revisable and applies structured relaxation over a context-conditioned consistency landscape, analytic or learned, enabling local mode selection in non-convex and ambiguous environments.
Importantly, Settling does not replace existing architectures or increase representational capacity.
It modifies how inference is performed given a representation.
The distinction is therefore procedural rather than architectural.

\subsection{Inference as Dynamical Relaxation}

Settling represents validity through a context-dependent scalar potential, which may be analytic or learned,
\begin{equation}
\mathcal{E} : \mathcal{S} \times \mathcal{C} \rightarrow \mathbb{R},
\end{equation}
where low-energy regions correspond to internally consistent, constraint-satisfying configurations.
Rather than explicitly predicting an optimal output, inference allows the internal representation to evolve until inconsistencies are eliminated.

Inference is governed by a recurrent \emph{inference operator}
\begin{equation}
\mathcal{T} : \mathcal{S} \rightarrow \mathcal{S},
\end{equation}
which iteratively updates the internal state conditioned on the context.
Starting from an initial hypothesis $s^{(0)}$, inference proceeds via repeated application of $\mathcal{T}$,
\begin{equation}
s^{(k+1)} = \mathcal{T}(s^{(k)}),
\end{equation}
until convergence to a fixed point
\begin{equation}
s^\star = \lim_{k \to \infty} s^{(k)},
\qquad
\nabla_s \mathcal{E}(s^\star, c) = 0.
\end{equation}

Crucially, $\mathcal{E}$ is not an externally specified task objective and does not encode a likelihood, reward, or cost to be globally minimized.
It is a context-conditioned consistency functional, analytic or learned, that encodes semantic, geometric, or physical constraints.
Settling therefore defines an \emph{inference operator}, not a task-specific optimizer.
The objective is not global optimality, but convergence to a locally stable equilibrium that resolves ambiguity under the given context.

We make no assumption of global convexity, contraction, or uniqueness of equilibria.
Stability is inherently local and conditioned on the structure of the chosen consistency landscape, whether analytic or learned.
As a result, inference dynamics naturally partition the state space into basins of attraction corresponding to distinct valid interpretations.

\subsection{Symmetry Breaking and Equilibrium Selection}

In ambiguous or bifurcated settings, the energy landscape typically contains multiple low-energy basins corresponding to distinct valid solutions.
Let $\mu_1$ and $\mu_2$ denote two such configurations, and let $\bar{\mu}$ denote their arithmetic mean.
In non-convex validity sets, the mean configuration is generally unstable:
\begin{equation}
\mathcal{E}(\mu_1) \approx \mathcal{E}(\mu_2) \ll \mathcal{E}(\bar{\mu}).
\end{equation}

Expectation-seeking inference operators collapse toward $\bar{\mu}$, as discussed in Section~\ref{sec:failure}.
Settling dynamics, in contrast, do not aggregate across hypotheses.
Instead, ambiguity is resolved through time evolution.
Unstable saddle points repel trajectories, while locally stable equilibria attract them.

Inference therefore replaces \emph{aggregation} with \emph{selection}.

The selected equilibrium is deterministic conditional on the initialization, energy landscape, update rule, and step size. In exactly symmetric configurations, a one-time initialization perturbation may be used only as a tie breaker; it is not injected throughout refinement and is not used to sample an output distribution. Once initialized, the settling trajectory is deterministic. This conditional notion of determinism is used throughout the paper and is distinct from end-to-end determinism under a freshly randomized initialization.

In landscapes with a symmetric unstable equilibrium, the same dynamics can produce a nonlinear symmetry-breaking transition.
As ambiguity varies, a mean-seeking point estimate may change continuously toward a compromise, whereas equilibrium-selection dynamics can change branches when the symmetric state loses local stability.
We use \emph{bifurcation} only in this dynamical-systems sense; we do not claim a thermodynamic phase transition or universal critical scaling.
The analytical reference in Section~\ref{subsec:exp-semantic} is therefore a normal-form illustration of branch selection, not empirical evidence of a universal threshold.

\subsection{Modular Structure of Settling Inference}

Settling decomposes inference into three irreducible components:
(i) hypothesis generation,
(ii) constraint evaluation,
and (iii) equilibrium selection.
This decomposition is not an architectural convenience, but a structural requirement for inference in non-convex validity sets.

\paragraph{Module I: Proposal Network (Hypothesis Generation).}
The proposal module is a fast, amortized mapping
\begin{equation}
\pi_\phi : \mathcal{C} \rightarrow \mathcal{S},
\end{equation}
which produces an initial hypothesis
\begin{equation}
s^{(0)} = \pi_\phi(c).
\end{equation}
Its role is semantic grounding and computational efficiency, not correctness.
Under ambiguity, $\pi_\phi$ may converge toward the conditional mean, which may lie in an invalid or unstable region of the state space.
Settling explicitly tolerates this behavior, as inference does not terminate at the proposal.

\paragraph{Module II: Energy Function (Constraint Evaluation).}
The energy function
\begin{equation}
E : \mathcal{S} \times \mathcal{C} \rightarrow \mathbb{R}
\end{equation}
assigns low energy to configurations that satisfy learned or analytic constraints.
The valid set is defined implicitly as
\begin{equation}
\mathcal{V}(c) = \{ s \in \mathcal{S} \mid E(s, c) \le \varepsilon \}.
\end{equation}
Unlike feed-forward predictors, the energy function provides an explicit mechanism for hypothesis rejection.
It can represent non-convex validity sets that are inaccessible to convex aggregation or single-step inference.

\paragraph{Module III: Settling Dynamics (Equilibrium Selection).}
Inference proceeds through iterative state evolution,
\begin{equation}
s^{(k+1)} = s^{(k)} - \eta \nabla_s E(s^{(k)}, c),
\end{equation}
initialized at $s^{(0)} = \pi_\phi(c)$.
Iteration continues until convergence to a stationary configuration
\begin{equation}
s^\star
\quad \text{s.t.} \quad
\nabla_s E(s^\star, c) = 0.
\end{equation}
Rather than computing a statistic of the conditional distribution, settling deterministically selects a single, internally consistent equilibrium.

\paragraph{Necessity of the Decomposition.}
Each component is essential.
Without the proposal network, inference reduces to unconstrained global search.
Without the energy function, the system collapses to mean-seeking approximation.
Without settling dynamics, invalid hypotheses cannot be corrected.
Only their combination enables deterministic, constraint-consistent inference and structured mode selection in non-convex solution spaces.
\section{A Practical Instantiation of Settling Inference}
\label{sec:method}

We now describe a concrete instantiation of the settling paradigm introduced in Section~\ref{sec:settling_paradigm}.
This instantiation operationalizes settling as a modular inference operator composed of hypothesis generation, constraint evaluation, and equilibrium selection.
We refer to this realization as \emph{Settling-Based Inference} (SBI).
SBI does not introduce a new inference principle.
Rather, it provides a realizable mechanism for equilibrium-based inference under ambiguity.
Alternative architectural realizations are possible, provided they preserve the same operator-level structure and inference dynamics.

Crucially, the behavior of SBI does not arise from optimization alone.
The settling dynamics act as an \emph{inference operator} applied at test time to ambiguous inputs, where the initial proposal is intentionally mean-seeking rather than validity-preserving.
The role of the dynamics is not to solve a task-defined objective, but to resolve inconsistency between a proposal and a context-conditioned consistency structure during inference.

Operationally, SBI can be viewed as two coupled stages.
The first stage constructs a stable internal configuration that satisfies contextual constraints without committing to a discrete interpretation.
The second stage resolves residual ambiguity by inducing structured competition among candidate interpretations, resulting in deterministic equilibrium selection.

\begin{figure}[!htbp]
\centering
\includegraphics[width=0.92\linewidth]{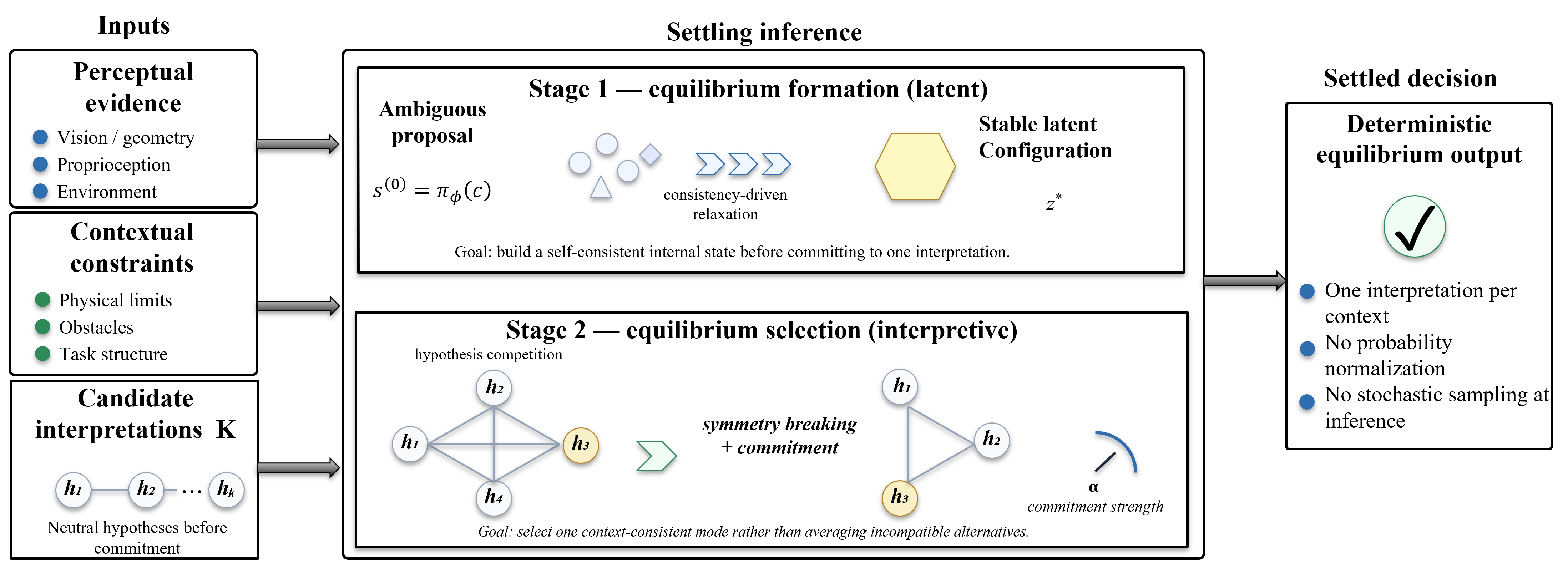}
\caption{SBI. Perceptual evidence, contextual constraints, and candidate interpretations initialize an ambiguous proposal. Stage~1 forms a self-consistent latent configuration; Stage~2 performs symmetry breaking and equilibrium selection, producing one committed equilibrium for the given initialized state. The schematic clarifies the operator decomposition and is not an empirical result.}
\label{fig:sbi-architecture}
\end{figure}
\FloatBarrier

\subsection{Energy Composition and Consistency Modeling}

SBI relies on a scalar consistency functional that evaluates whether an internal configuration is compatible with perceptual evidence and contextual constraints.
In practice, this functional is implemented as a composite energy
\begin{equation}
\mathcal{E}_{\text{total}}(s, c)
=
E_{\text{constraint}}(s, c)
+ \lambda_p \| s - \pi_\phi(c) \|_2^2
+ \lambda_s \mathcal{R}_{\text{smooth}}(s),
\end{equation}
where each term serves a distinct inferential role rather than representing a probabilistic likelihood or a task-defined objective.

The \textbf{constraint energy} $E_{\text{constraint}}(s,c)$ assigns low values to configurations that satisfy learned or analytic constraints imposed by the environment, task structure, or physical feasibility.
This term defines a generally non-convex validity structure and provides an explicit mechanism for rejecting invalid configurations.
The \textbf{proposal anchoring term} $\| s - \pi_\phi(c) \|_2^2$ anchors inference to the amortized proposal produced by the hypothesis generator.
This term does not encode uncertainty or a Bayesian prior.
Instead, it enforces semantic coherence between the evolving internal configuration and the context-conditioned proposal, preventing convergence to constraint-satisfying but context-irrelevant equilibria.
Finally, the \textbf{structural regularization} term $\mathcal{R}_{\text{smooth}}(s)$ encodes generic preferences such as smoothness or sparsity.
It shapes the geometry of the settling trajectory without, by itself, resolving ambiguity or enforcing correctness.

\paragraph{Energy Alignment Assumption.}
Settling assumes that locally valid configurations correspond to lower energy than invalid ones.
This assumption is standard in constraint-driven inference systems and energy-based formulations.
Our analysis does not rely on global optimality or convexity of the energy landscape.
Instead, it requires only the existence of locally stable equilibria corresponding to valid interpretations.
Settling is therefore concerned with inference-time collapse under ambiguity, rather than with guarantees of global optimality.

\subsection{Settling Dynamics as an Inference Operator}

Inference in SBI is realized through iterative state evolution governed by the consistency landscape
\begin{equation}
s^{(k+1)} =
\mathrm{Proj}_{\Omega}
\left(
s^{(k)} - \eta \nabla_s \mathcal{E}_{\text{total}}(s^{(k)}, c)
\right),
\qquad
\\
s^{(0)} = \pi_\phi(c) + \epsilon,
\end{equation}
where $\Omega$ denotes optional simple hard domain constraints and $\epsilon$ is a one-time symmetry-breaking perturbation used only when an initialization lies at an exactly symmetric saddle.

\textbf{Role of Projection.}
The projection operator $\mathrm{Proj}_{\Omega}$ enforces only simple domain bounds.
It does not encode complex validity constraints such as obstacles, semantic consistency, or task logic.
All non-trivial validity structure is learned and enforced through the constraint energy $E_{\text{constraint}}$.
Avoidance of invalid configurations therefore emerges from the dynamics of the consistency landscape rather than from the projection step.

These dynamics are not executed to solve an externally specified optimization problem.
Rather, they implement inference by progressively resolving inconsistency between the internal configuration and the context-conditioned consistency structure.
Iteration continues until convergence to a stable fixed point corresponding to a committed interpretation that satisfies validity constraints under the given context.

A perturbation is introduced only to destabilize an exactly symmetric saddle. No further noise is injected by Settling. Conditional on the initialized state, the subsequent dynamics evolve deterministically toward a stable equilibrium; no iterative stochastic sampling or probability normalization is performed.

\subsection{Necessity of the Three Components}

SBI decomposes inference into proposal generation, constraint evaluation, and settling dynamics.
Each component plays a necessary role.

\begin{enumerate}
    \item \textbf{Proposal alone is insufficient.}
    A purely feed-forward predictor collapses toward the conditional mean under ambiguity.
    In non-convex validity sets, this produces internally inconsistent or infeasible representations, as shown in Section~\ref{sec:failure}.
    
    \item \textbf{Constraint evaluation alone is insufficient.}
    Evaluating constraints without an amortized proposal reduces inference to unconstrained non-convex search, which is computationally inefficient and prone to convergence toward context-irrelevant equilibria.
    
    \item \textbf{Dynamics alone are insufficient.}
    Without anchoring to the proposal, settling dynamics may converge to constraint-satisfying configurations that are disconnected from the input context.
\end{enumerate}

SBI emerges from the interaction of all three components.
The proposal provides rapid, context-sensitive hypothesis generation.
The energy function defines a non-convex validity structure.
The dynamics perform deterministic equilibrium selection.
Together, these elements enable structured mode selection in settings where single-step, mean-seeking inference collapses internal structure.

\section{Theoretical Properties of Settling}
\label{sec:theory_settling}

Having formalized settling as an inference operator rather than a feed-forward mapping, we now characterize its fundamental theoretical properties.
Our analysis focuses on four properties relevant to the operator interpretation:
(i) monotonic improvement under exact consistency gradients,
(ii) robustness of descent to bounded inexact gradients,
(iii) local convergence to stable equilibria under explicit regularity assumptions,
and (iv) deterministic equilibrium selection conditional on initialization.

Throughout this section, we emphasize that these properties concern \emph{inference-time dynamics} induced by the settling operator.
They do not rely on convexity, global optimality, or probabilistic correctness.
All results are local and conditional on the structure of the consistency landscape, whether analytic or learned.
This scope is intentional and reflects the role of settling as an inference mechanism rather than as an optimization algorithm or a probabilistic estimator.

\subsection{Monotonic Improvement of Internal Consistency}

Unlike a terminal single-step point estimate, Settling exposes a test-time sequence of states whose consistency with an explicit landscape can be evaluated at each iteration.
This supplies a notion of \emph{computational progress} tied to the same validity-oriented functional used for refinement.

\begin{lemma}[Monotonic Consistency Improvement]
\label{lem:lyapunov}
Let $\mathcal{E}(s,c)$ be a continuously differentiable consistency functional with $L$-Lipschitz continuous gradients and bounded below.
Then, for any step size $\eta < 2/L$, the settling update
$
s^{(k+1)} = s^{(k)} - \eta \nabla_s \mathcal{E}(s^{(k)}, c)
$
induces a discrete-time dynamical process in which $\mathcal{E}(s^{(k)},c)$ decreases monotonically with each iteration.
\end{lemma}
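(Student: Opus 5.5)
The plan is to apply the standard descent lemma for functions with $L$-Lipschitz gradients to the settling update. Fix the context $c$ and write $g_k=\nabla_s\mathcal{E}(s^{(k)},c)$. The Lipschitz condition gives the quadratic upper bound
\begin{equation*}
\mathcal{E}(s',c)\le \mathcal{E}(s,c)+\nabla_s\mathcal{E}(s,c)^\top(s'-s)+\tfrac{L}{2}\|s'-s\|_2^2 .
\end{equation*}
I would derive it by writing $\mathcal{E}(s',c)-\mathcal{E}(s,c)=\int_0^1\nabla_s\mathcal{E}(s+t(s'-s),c)^\top(s'-s)\,dt$. Subtracting the linear term leaves $\int_0^1 \big(\nabla_s\mathcal{E}(s+t(s'-s),c)-\nabla_s\mathcal{E}(s,c)\big)^\top(s'-s)\,dt$. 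By Cauchy--Schwarz and the Lipschitz bound, this integral is at most $\int_0^1 Lt\|s'-s\|^2\,dt=\tfrac{L}{2}\|s'-s\|^2$.

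Next I substitute $s=s^{(k)}$ and $s'=s^{(k+1)}=s^{(k)}-\eta g_k$. This yields
\begin{equation*}
\mathcal{E}(s^{(k+1)},c)\le \mathcal{E}(s^{(k)},c)-\eta\Big(1-\tfrac{L\eta}{2}\Big)\|g_k\|_2^2 .
\end{equation*}
For $0<\eta<2/L$ the coefficient $\eta(1-L\eta/2)$ is strictly positive. Hence the sequence $\mathcal{E}(s^{(k)},c)$ is non-increasing. It decreases strictly whenever $g_k\neq 0$, and if $g_k=0$ the iteration is already at a fixed point. So the claim of monotonic decrease should be read as non-increase with strict decrease away from stationary points, and I would state this explicitly. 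I would also assume $\eta>0$, which the statement leaves implicit.

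Finally, I would use boundedness below to extract the usual corollary, even though monotonicity itself does not need it. Summing the inequality over $k=0,\dots,K-1$ and telescoping gives
\begin{equation*}
\sum_k\|g_k\|^2\le \frac{\mathcal{E}(s^{(0)},c)-\inf\mathcal{E}}{\eta(1-L\eta/2)}<\infty .
\end{equation*}
Therefore $\mathcal{E}(s^{(k)},c)$ converges and $\|g_k\|\to 0$, which is what justifies calling $\mathcal{E}$ a Lyapunov function for the process.

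There is no real obstacle here. The only points needing care are that the Lipschitz property is taken in $s$ for the fixed $c$, and the caveat about strict versus weak decrease at stationary points.
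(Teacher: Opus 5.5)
Your proposal is correct and follows essentially the same route as the paper: apply the descent lemma to obtain $\mathcal{E}(s^{(k+1)},c)\le\mathcal{E}(s^{(k)},c)-\eta(1-L\eta/2)\|g_k\|_2^2$, and note that the coefficient is positive for $0<\eta<2/L$, so the energy is non-increasing and strictly decreases whenever $g_k\neq 0$. Your integral derivation of the descent lemma and the telescoping corollary go beyond what the paper writes out, but they are correct additions. The strict-versus-weak caveat and the implicit assumption $\eta>0$ match the paper's own proof.
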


\noindent\textit{Proof provided in Appendix~\ref{app:monotonicity}.}

This result establishes that, for the idealized exact-gradient operator, the consistency value supplies an ordered test-time progress measure.
The quantity being decreased is the inference-time consistency functional rather than the training loss. A single-pass point predictor may of course contain many internal layers, but it does not by itself expose a test-time sequence ordered by this same validity-oriented functional.

\subsection{Robustness to Inexact Consistency Gradients}
A learned consistency critic need not reproduce an analytic energy exactly; what matters locally is whether its update direction remains sufficiently aligned with the gradient of an ideal consistency functional. The following standard inexact-gradient bound makes that requirement explicit.

\begin{lemma}[Inexact Consistency Descent]
\label{lem:inexact-gradient}
Let $\mathcal{E}(s,c)$ be $L$-smooth and let the implemented update use
\begin{equation}
\tilde g(s,c)=\nabla_s\mathcal{E}(s,c)+e(s,c),
\end{equation}
with relative error $\lVert e(s,c)\rVert_2\leq\delta\lVert\nabla_s\mathcal{E}(s,c)\rVert_2$ for some $0\leq\delta<1$. Then the update
\begin{equation}
s^+=s-\eta\tilde g(s,c)
\end{equation}
strictly decreases $\mathcal{E}$ whenever $\nabla_s\mathcal{E}(s,c)\neq0$ and
\begin{equation}
0<\eta<\frac{2(1-\delta)}{L(1+\delta)^2}.
\end{equation}
\end{lemma}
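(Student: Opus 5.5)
The plan is to use the descent lemma for $L$-smooth functions, the same inequality that underlies Lemma~\ref{lem:lyapunov}, and then bound the inner-product and squared-norm terms separately with the relative error assumption. Write $g=\nabla_s\mathcal{E}(s,c)$ and $\tilde g=g+e$. By $L$-smoothness,
\begin{equation}
\mathcal{E}(s^+,c)\le \mathcal{E}(s,c)-\eta\langle g,\tilde g\rangle+\frac{L\eta^2}{2}\lVert\tilde g\rVert_2^2 .
\end{equation}
It therefore suffices to show that the correction term $-\eta\langle g,\tilde g\rangle+\tfrac{L\eta^2}{2}\lVert\tilde g\rVert_2^2$ is strictly negative whenever $g\neq 0$ and $\eta$ lies in the stated range.

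Both terms come from the bound $\lVert e\rVert_2\le\delta\lVert g\rVert_2$.

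For the inner product, Cauchy--Schwarz gives $\langle g,\tilde g\rangle=\lVert g\rVert_2^2+\langle g,e\rangle\ge(1-\delta)\lVert g\rVert_2^2$. This is strictly positive because $\delta<1$ and $g\neq0$, so $\tilde g$ is a descent direction.

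For the norm, the triangle inequality gives $\lVert\tilde g\rVert_2\le(1+\delta)\lVert g\rVert_2$, and hence $\lVert\tilde g\rVert_2^2\le(1+\delta)^2\lVert g\rVert_2^2$.

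Substituting both bounds yields
\begin{equation}
\mathcal{E}(s^+,c)\le\mathcal{E}(s,c)-\eta\lVert g\rVert_2^2\Bigl[(1-\delta)-\frac{L\eta}{2}(1+\delta)^2\Bigr].
\end{equation}
The bracket is strictly positive exactly when $\eta<2(1-\delta)/\bigl(L(1+\delta)^2\bigr)$. Together with $\eta>0$ and $\lVert g\rVert_2>0$, this makes the decrement strictly positive, which is the claim.

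There is no real obstacle here. The one point needing care is the direction of each inequality: the inner product must be bounded from below and the squared norm from above, and these must be combined as two independent worst cases rather than tied through a single configuration of $e$. The result is a conservative step-size bound, which is all the lemma asserts.

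It is also worth checking the special case $\delta=0$. There the range becomes $\eta<2/L$, matching Lemma~\ref{lem:lyapunov}, which confirms the constants are consistent.
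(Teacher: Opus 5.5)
Your proposal is correct and matches the paper's proof in Appendix~\ref{app:inexact-gradient} step for step. Both apply the descent lemma to $s-\eta\tilde g$, bound $g^\top\tilde g\ge(1-\delta)\lVert g\rVert_2^2$ from below and $\lVert\tilde g\rVert_2\le(1+\delta)\lVert g\rVert_2$ from above, and conclude that the bracket $(1-\delta)-\tfrac{L\eta}{2}(1+\delta)^2$ is positive under the stated step-size bound. Your $\delta=0$ consistency check against Lemma~\ref{lem:lyapunov} is a small addition the paper leaves implicit.
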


\noindent\textit{Proof provided in Appendix~\ref{app:inexact-gradient}.}

Lemma~\ref{lem:inexact-gradient} is a local transfer statement, not a learnability theorem. It shows that exact reproduction of the analytic consistency field is unnecessary for monotone refinement: sufficiently small relative gradient error preserves descent. It does not guarantee that a learned critic will satisfy the bound globally, place minima at semantically correct states, or avoid spurious basins; those remain empirical questions.

\subsection{Convergence to Stable Equilibria}

Settling resolves ambiguity locally by converging toward stable configurations of the consistency landscape, including settings in which the valid set is non-convex or disconnected.

\begin{theorem}[Local Convergence to a Stable Valid Equilibrium]
\label{thm:settling_convergence}
Fix a context $c$ and let $s^\star\in\mathcal{V}(c)$ be a stationary point of $\mathcal{E}(\cdot,c)$. Suppose there exists a convex, forward-invariant neighborhood $U$ of $s^\star$ on which $\mathcal{E}(\cdot,c)$ is $\mu$-strongly convex and has $L$-Lipschitz gradient, with $0<\mu\leq L$. Then for any $s^{(0)}\in U$ and step size $0<\eta\leq1/L$, the unprojected Settling update converges linearly to $s^\star$.
\end{theorem}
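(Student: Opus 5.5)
The plan is the standard linear-convergence argument for gradient descent on a strongly convex, smooth function, applied locally on $U$. Forward invariance is what lets a global argument run on a neighborhood. Write $f(s)=\mathcal{E}(s,c)$ and $T(s)=s-\eta\nabla f(s)$.

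\textbf{Step 1: the iterates stay in $U$.} I first argue by induction that every iterate lies in $U$. Here I read ``forward-invariant'' as meaning $T(U)\subseteq U$ for the step sizes under consideration. Then $s^{(0)}\in U$ gives $s^{(k)}\in U$ for all $k$. As a result, every inequality below involves only points of $U$, where the strong-convexity and Lipschitz hypotheses hold. Convexity of $U$ also matters here: it guarantees that segments between points of $U$ stay in $U$. This is needed because the integral forms of these inequalities are taken along such segments.

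\textbf{Step 2: $T$ is a contraction toward $s^\star$.} Since $s^\star$ is stationary, $\nabla f(s^\star)=0$, so
\[
s^{(k+1)}-s^\star = (s^{(k)}-s^\star)-\eta\bigl(\nabla f(s^{(k)})-\nabla f(s^\star)\bigr).
\]
Expanding the squared norm produces a cross term $-2\eta\langle \nabla f(s)-\nabla f(s^\star), s-s^\star\rangle$ and a term $\eta^2\|\nabla f(s)-\nabla f(s^\star)\|^2$. I would bound these using the co-coercivity inequality for $\mu$-strongly convex, $L$-smooth functions on the convex set $U$:
\[
\langle \nabla f(x)-\nabla f(y),x-y\rangle \ge \frac{\mu L}{\mu+L}\|x-y\|^2+\frac{1}{\mu+L}\|\nabla f(x)-\nabla f(y)\|^2 .
\]
With $\eta\le 1/L\le 2/(\mu+L)$, the gradient-difference term then has a nonpositive coefficient and can be dropped. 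This leaves
\[
\|s^{(k+1)}-s^\star\|^2\le \Bigl(1-\frac{2\eta\mu L}{\mu+L}\Bigr)\|s^{(k)}-s^\star\|^2 \le (1-\eta\mu)\|s^{(k)}-s^\star\|^2,
\]
where the last step uses $2L/(\mu+L)\ge 1$.

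\textbf{A simpler alternative for Step 2.} Write $\nabla f(s)-\nabla f(s^\star)=H(s-s^\star)$, where $H=\int_0^1\nabla^2 f(s^\star+t(s-s^\star))\,dt$. This is valid on the segment, which lies in $U$ by convexity. The matrix $H$ has spectrum in $[\mu,L]$ when $f$ is $C^2$. Then $I-\eta H$ has spectrum in $[0,1-\eta\mu]$. To avoid assuming second derivatives, I would stick with the co-coercivity route.

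\textbf{Step 3: conclude.} Iterating Step 2 gives
\[
\|s^{(k)}-s^\star\|\le(1-\eta\mu)^{k/2}\|s^{(0)}-s^\star\|,
\]
which is linear convergence, since $0\le 1-\eta\mu<1$. Two further facts follow. Lemma~\ref{lem:lyapunov} additionally gives monotone decrease of $\mathcal{E}$ along the way. Strong convexity also makes $s^\star$ the unique stationary point in $U$, so the limit is the stated valid equilibrium.

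\textbf{Main obstacle.} The hardest step is justifying co-coercivity when strong convexity holds only on $U$ rather than on all of $\mathbb{R}^d$. The textbook proof of that inequality minimizes auxiliary functions globally. I would first try to avoid this by using a weaker, purely local pair of inequalities. The first is strong monotonicity, $\langle \nabla f(x)-\nabla f(y),x-y\rangle\ge\mu\|x-y\|^2$, which follows from strong convexity along the segment. The second is the Lipschitz bound $\|\nabla f(x)-\nabla f(y)\|\le L\|x-y\|$. Together they give the contraction factor $1-2\eta\mu+\eta^2L^2$. With $\eta\le 1/L$ this is at most $1-2\eta\mu+\eta L\le$ something that is only $<1$ when $\eta$ is small relative to $\mu/L^2$, which falls short of the full range $\eta\le 1/L$. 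If that gap cannot be closed, I would fall back on the integral-Hessian argument, either on a mollified $f$ or by proving co-coercivity restricted to segments inside $U$ via the one-dimensional functions $t\mapsto f(y+t(x-y))$.
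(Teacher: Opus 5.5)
Your proposal follows the paper's route: forward invariance keeps every iterate in $U$, a gradient-step contraction toward $s^\star$ holds there, and iterating it gives linear convergence. The paper differs only in citing the ``standard strong-convexity contraction'' with the sharper per-step factor $1-\eta\mu$ on $\lVert s^{(k)}-s^\star\rVert_2$ (your co-coercivity route gives the weaker but sufficient $\sqrt{1-\eta\mu}$), without checking that the usual global derivation of that contraction is valid on $U$, which is exactly the locality issue you flag. Your mollified averaged-Hessian fallback closes that issue (apply it on $\mathrm{int}\,U$, then extend to $U$ by continuity of $\nabla f$): eigenvalue by eigenvalue of $H$, your co-coercivity inequality reduces to $(\lambda-\mu)(L-\lambda)\geq0$ and $\lVert I-\eta H\rVert_2\leq 1-\eta\mu$, which also recovers the paper's factor; by contrast, the one-dimensional restrictions $t\mapsto f(y+t(x-y))$ alone would not suffice, since they only control the component of $\nabla f(x)-\nabla f(y)$ along $x-y$.
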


\noindent\textit{Proof provided in Appendix~\ref{app:local-convergence}.}

\noindent\textit{Scope Note.}
The theorem is deliberately local. A globally non-convex landscape may contain several valid or spurious basins, and the result does not state which basin an arbitrary initialization will enter. It states that once the initialized state lies in a forward-invariant neighborhood with ordinary local regularity, the refinement converges to the corresponding stable valid equilibrium. If a learned consistency critic is misspecified so that its stable point is not valid, the theorem does not convert that point into a correct one.

\subsection{Deterministic Equilibrium Selection Under Ambiguity}

A defining property of settling inference is its ability to resolve multimodality without resorting to stochastic sampling or expectation-based aggregation.

\paragraph {Deterministic Equilibrium Selection}
Settling does not require uniqueness of equilibria. In multimodal settings, the equilibrium reached is determined by the initialization and by the local geometry of the consistency landscape. Conditional on that initialization, settling implements deterministic instance-level equilibrium selection rather than averaging incompatible hypotheses or performing iterative stochastic sampling.

This behavior contrasts with mean-seeking point inference, which can collapse distinct valid modes into an invalid average, and with sampling-based approaches that represent ambiguity through multiple stochastic outcomes.
Settling instead exploits the stability structure of an explicit consistency landscape to produce a single, internally consistent representation per context.
Ambiguity resolution therefore emerges from inference-time dynamics rather than from probabilistic normalization or stochastic exploration.
\section{Relation to Existing Paradigms}
\label{sec:relations}

Settling shares computational machinery with energy-based models, diffusion models, and classical optimization or planning methods.
These similarities arise at the level of algorithmic components rather than inferential intent.
Settling is formulated as an \emph{inference operator} for deterministic equilibrium selection under ambiguity.
It is not a generative model, not a density estimator, and not a claim of a new generic optimization algorithm.

This section situates settling relative to existing approaches by clarifying the role inference plays in each paradigm.

\subsection{Relation to Energy-Based Models}

Energy-Based Models (EBMs) use learned energy functions to score configurations and, in probabilistic formulations, may define unnormalized densities such as
\begin{equation}
p(x) \propto \exp(-E_\theta(x)).
\end{equation}
Depending on the formulation, inference may use sampling, MAP-style optimization, or other structured procedures~\cite{lecun2006tutorial,du2019implicit}. Settling therefore differs primarily in inferential role rather than in the mere presence of an energy function.
The energy function in settling is not trained to model a probability density and does not define a generative process.
Instead, it serves as a \emph{consistency critic}, assigning low energy to constraint-satisfying configurations and higher energy to invalid ones under a fixed context.
No normalization, marginalization, or sampling is performed.
The energy landscape is used solely to shape inference-time dynamics.

Settling uses its consistency landscape specifically as a test-time selection mechanism: given a context and an initial hypothesis, refinement converges, conditional on initialization and local basin geometry, toward a single stable equilibrium. This interpretation is narrower than the broader probabilistic and structured-prediction uses of energy functions.

\subsection{Relation to Diffusion and Score-Based Models}

Diffusion and score-based models generate data by reversing a stochastic noise process, progressively transforming noise into structured samples~\cite{ho2020denoising}.
These methods are explicitly designed for sampling and distributional coverage.

Although settling also employs iterative refinement, its purpose is fundamentally different.
Settling is not a sampling procedure and does not aim to represent distributional diversity.
Instead, it performs deterministic relaxation from an initial hypothesis toward a stable equilibrium under learned constraints.

Stochasticity plays distinct roles.
In diffusion models, noise is essential throughout inference.
In Settling, noise, when used at all, is confined to initialization to destabilize symmetric saddle points. Once initialized away from exact symmetry, inference proceeds deterministically toward a single equilibrium for that initialized state.

\subsection{Relation to Optimization and Planning}

Classical planning and trajectory optimization methods minimize explicitly defined cost functionals to compute feasible solutions~\cite{ratliff2009chomp,schulman2014trajopt}.
These approaches assume that objectives and constraints are specified a priori.

Settling can be viewed as inference-time relaxation over a context-conditioned consistency landscape that may be analytic or learned.
Unlike classical optimization, settling does not solve a user-defined task objective.
The dynamics are a \emph{mechanism}, not the goal: they resolve inconsistency between an amortized proposal and learned validity constraints.
Settling therefore reframes optimization-style dynamics as an inference operator rather than as a problem solver.

\subsection{Relation to Deep Equilibrium and Structured Energy Models}
Deep equilibrium models compute fixed points of learned transformations and use implicit differentiation to train effectively infinite-depth networks~\cite{bai2019deep}. Structured prediction energy networks use learned energies to score structured outputs and perform optimization-based prediction~\cite{belanger2016structured}, while differentiable optimization layers embed optimization problems inside neural networks~\cite{amos2017optnet}. These lines of work establish the value of fixed-point computation and energy-shaped prediction, but they address different questions. Settling is introduced here as an inference operator specifically for ambiguity in non-convex validity sets: an amortized proposal may be invalid, and test-time dynamics are used to reject that proposal and select a locally stable, constraint-consistent equilibrium. We therefore do not claim novelty for fixed-point iteration or gradient-based relaxation themselves; the contribution is the conditional-mean-collapse formulation, the operator-level decomposition, and the resulting inference interpretation.

\subsection{Structural Comparison of Inference Paradigms}

The paradigms differ primarily in inferential purpose. Mean-seeking aggregation produces a single aggregate; diffusion emphasizes stochastic sampling and distributional coverage; conventional EBMs define energy-shaped densities and often rely on sampling or optimization; Settling uses an energy-like consistency landscape to refine one initialized hypothesis toward an equilibrium. The comparison concerns inference semantics rather than task-level superiority.

The restricted attention-like baseline studied here uses a terminal convex aggregate, which can be misaligned with a non-convex validity set when that aggregate lies outside the admissible region.
Diffusion models prioritize stochastic sampling for distributional coverage.
EBMs represent densities and typically rely on sampling-based inference.

Settling departs from these paradigms by redefining the inference objective itself.
Rather than computing expectations, sampling distributions, or estimating densities, Settling refines an initialized state toward an equilibrium of an explicit consistency landscape.
The output is a stable fixed point of a context-conditioned dynamical system, making constraint satisfaction explicit rather than emergent.

\section{Analytic Validation and a Learnable Extension}
\label{sec:training}

\subsection{What Is Validated in This Paper}
The quantitative contribution of this paper concerns the \emph{inference operator}: an initially mean-seeking hypothesis is evaluated against a non-convex consistency landscape and refined toward a locally stable valid equilibrium. In the reproducible study, both the proposal and the consistency energy are analytic. This choice is deliberate because it makes the validity geometry observable and isolates inference-time equilibrium selection from representation learning, model capacity, and training instability.

Accordingly, this paper does \emph{not} empirically establish that a learned proposal network and a learned energy network will preserve the same behavior in high-dimensional data. Such a result would require a separate evaluation of representation quality, negative-sample construction, energy misspecification, optimization stability, scaling, and compute. The learned formulation below is therefore a compatible extension of the operator decomposition, not an additional empirical claim.

\subsection{Possible Learned Proposal}
A learnable realization may use an amortized proposal network $\pi_{\phi}:\mathcal{C}\rightarrow\mathcal{S}$. For continuous point prediction, one compatible objective is
\begin{equation}
\mathcal{L}_{\text{prop}}(\phi)
=
\mathbb{E}_{(c,s)}\!\left[\lVert \pi_{\phi}(c)-s\rVert_2^2\right].
\end{equation}
Under the specific ambiguous distributions analyzed by Theorem~\ref{thm:loss_change_not_enough}, squared-loss point estimation targets the conditional mean. In a Settling realization, this proposal need not itself be valid; its role is to provide a fast, context-aligned initialization $s^{(0)}=\pi_{\phi}(c)$ that can subsequently be revised.

Other proposal objectives are possible. In particular, mixture-valued, set-valued, autoregressive, or explicitly mode-seeking proposals need not collapse to an arithmetic mean and are outside the failure statement of Theorem~\ref{thm:loss_change_not_enough}. A learned comparison should therefore include such baselines rather than treating all neural proposals as mean-seeking.

\subsection{Possible Learned Consistency Energy}
A learnable consistency critic $E_{\theta}:\mathcal{S}\times\mathcal{C}\rightarrow\mathbb{R}$ may be trained to rank valid states below invalid ones for a fixed context. One compatible margin objective is
\begin{equation}
\mathcal{L}_{\text{energy}}(\theta)
=
\mathbb{E}_{(c,s^+,s^-)}
\left[
\max\!\left(0, E_{\theta}(s^+,c)-E_{\theta}(s^-,c)+m\right)
\right],
\end{equation}
where $s^+\in\mathcal{V}(c)$, $s^-\notin\mathcal{V}(c)$, and $m>0$. Structured interpolations between valid modes are useful negatives only when an external validity rule or labeled data establishes that those interpolations are actually invalid. The formulation therefore does not assume that averages are universally incorrect; it asks the energy to represent the task-specific validity structure supplied by data or constraints.

\subsection{Training Coupling Is an Open Design Choice}
The proposal and consistency critic have distinct inferential roles, but this paper does not evaluate whether they should be trained independently, jointly, or with alternating objectives. We therefore avoid making a general claim that joint training necessarily helps or harms ambiguity resolution. What the operator requires at inference time is simply an initialization and a consistency landscape whose local minima correspond sufficiently well to valid interpretations.

\subsection{What the Theory Transfers to a Learned Realization}
The theoretical bridge from the analytic diagnostic to a learned realization is intentionally conditional. Theorem~\ref{thm:settling_convergence} requires only local regularity around a target equilibrium, while Lemma~\ref{lem:inexact-gradient} shows that monotone refinement survives bounded relative gradient error. Thus a learned critic need not recover the analytic energy pointwise; it must recover a locally useful vector field and stable basin geometry. This is a weaker and more testable requirement. Conversely, if learned gradients are poorly aligned, if the proposal falls outside recoverable basins, or if spurious stable points dominate, the learned realization should fail even though the analytic diagnostic succeeds. The paper therefore turns the learned-deployment question into explicit hypotheses that can be measured rather than assuming transfer by analogy.

\subsection{Controlled Analytic Instantiation}\label{subsec:analytic-instantiation}
The instantiation actually evaluated in Section~\ref{sec:experiments} uses no neural-network training. The proposal is the analytic conditional mean of two designated valid trajectory modes, and the consistency landscape is defined directly from obstacle geometry plus weak proposal anchoring and smoothness. This diagnostic design keeps every relevant failure and recovery mechanism inspectable: the mean is known, invalidity is explicit, the barrier geometry is known, and the refinement trajectory is reproducible. The resulting experiment tests whether equilibrium selection can repair an invalid aggregate under the stated landscape; it does not by itself establish learned high-dimensional generalization.

\section{Controlled Evidence and Diagnostics}\label{sec:experiments}
\subsection{Evidence Organization and Claim Scope}
The evidence blocks do not carry equal weight. Evidence~I is the quantitative reproducibility anchor: its source code, randomized-context evaluation, publication figures, and animation are intended for release in the accompanying GitHub reproducibility archive. The semantic and sensor-fusion panels are mechanism illustrations rather than benchmark evidence. The numerical ablations and geometric diagnostics are reproducible from the released analytic code. Table~\ref{tab:evidence-status} makes this separation explicit.

\begin{table}[t]
\centering
\small
\begin{tabular}{p{0.18\linewidth}p{0.24\linewidth}p{0.48\linewidth}}
\toprule
Evidence block & Status & Claim supported \\
\midrule
Geometric ambiguity & Reproducible quantitative & Conditional-mean invalidity and recovery under the declared analytic landscape and 100 randomized contexts. \\
Semantic ambiguity & Analytical / retained illustration & Qualitative compatibility of the operator interpretation with continuous degradation versus nonlinear branch selection; no re-estimated CLIP benchmark claim. \\
Sensor fusion & Retained illustration & Conceptual transfer of equilibrium selection to conflicting evidence; no quantitative cross-domain claim. \\
Ablations / diagnostics & Reproducible analytic code & Dependence on refinement, tie breaking, and numerical basin selection in the released geometric implementation. \\
\bottomrule
\end{tabular}
\caption{Evidence status and claim boundary. Only the geometric study and code-generated diagnostics are used as quantitative reproducible evidence.}
\label{tab:evidence-status}
\end{table}

The purpose throughout is mechanism isolation rather than benchmark ranking. In particular, the paper does not present a learned high-dimensional deployment; that limitation is discussed explicitly in Section~\ref{sec:limitations}.

\subsection{Evidence I: Geometric Ambiguity and Topological Validity}\label{subsec:exp-planning}
The geometric environment represents an A-to-B decision with non-convex validity. The state sequence contains 40 waypoints from $(-1,0)$ to $(1,0)$. In the nominal context, two circular obstacles of radius $0.25$ are centered at $(-0.4,0)$ and $(0.4,0)$. Two designated valid trajectories pass on opposite sides of the obstacles, while their arithmetic mean occupies the central invalid region. Figure~\ref{fig:geom-env} shows the environment and the validity geometry.

\begin{figure}[t]
\centering
\includegraphics[width=0.92\linewidth]{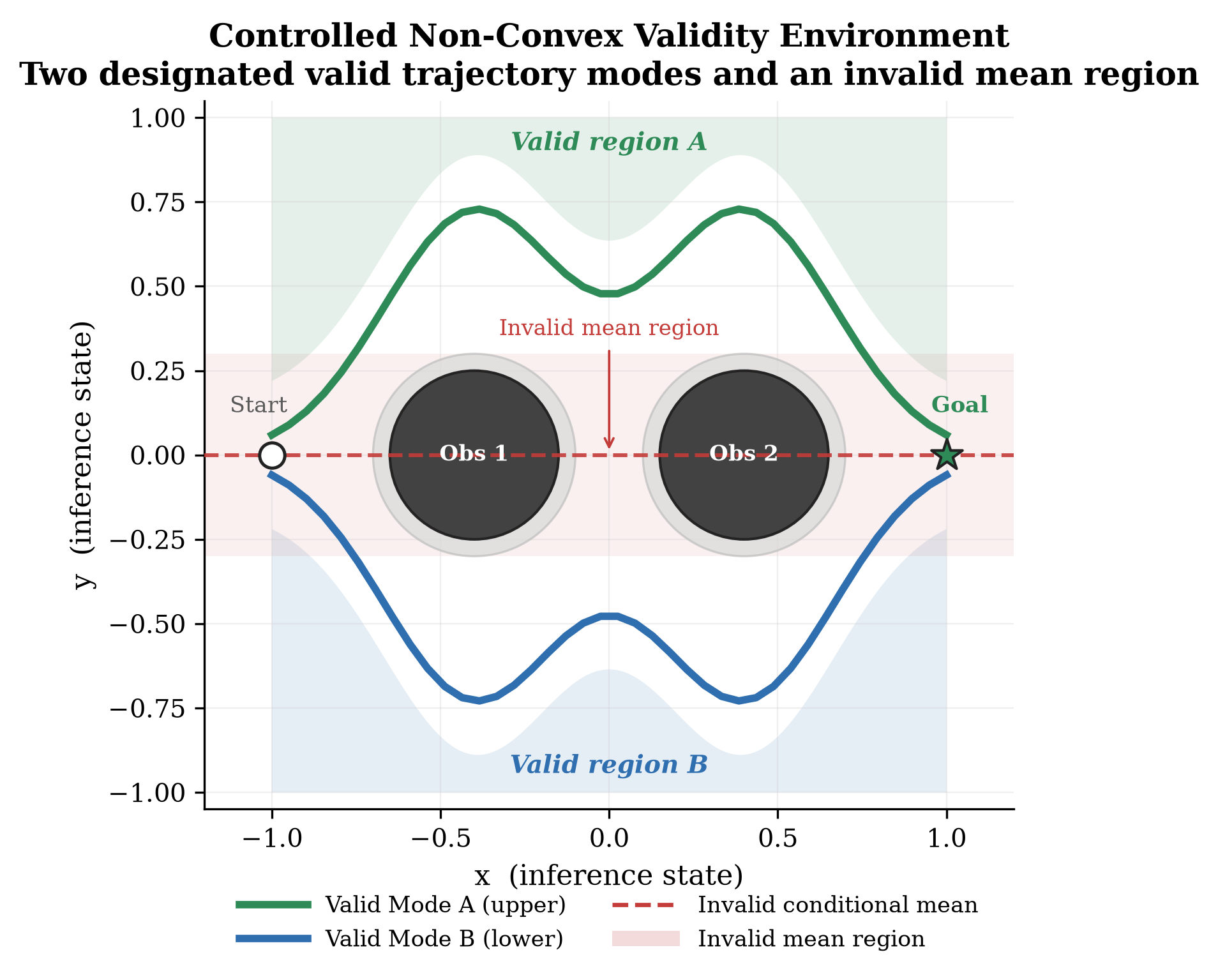}
\caption{Reproducible geometric environment used for Evidence~I. The two designated A-to-B modes are collision free, while the central mean trajectory intersects the obstacle region.}
\label{fig:geom-env}
\end{figure}

A trajectory is counted as collision free only when every waypoint clears both obstacle cores by at least $0.02$. For the 100-context evaluation, the vertical coordinates of the two obstacle centers are independently jittered uniformly within $[-0.08,0.08]$. The same contexts are presented to all methods.

\subsubsection{Operator baselines and inference budgets}
The comparison is deliberately operator-level and should not be read as a benchmark of complete Transformer, diffusion, or EBM systems.
\begin{itemize}
\item \textbf{Mean-seeking / attention-like aggregate.} The output is the analytical arithmetic mean of the two designated valid modes. In the symmetric context this is the exact conditional mean associated with squared-loss point estimation.
\item \textbf{Stochastic denoising.} The trajectory is initialized from Gaussian noise and iteratively smoothed while obstacle/end-point gradients and decaying noise are applied. The 100-context evaluation uses 100 refinement steps. This is a stylized stochastic refinement baseline, not a claim about every diffusion architecture.
\item \textbf{Direct energy descent.} Gradient descent is performed from an unguided random initialization using obstacle, end-point, and light smoothness terms. The 100-context evaluation uses 180 steps. This baseline tests whether energy descent alone reproduces the behavior of proposal-grounded Settling.
\item \textbf{Settling.} The proposal is the analytical mean, followed by a one-time symmetry-breaking initialization perturbation and deterministic refinement of a seven-control-point spline representation. The evaluation uses 120 settling steps. The released implementation uses obstacle weight $\lambda_{\mathrm{obs}}=80$ and proposal-anchor weight $\lambda_{\mathrm{anchor}}=0.05$.
\end{itemize}
For the released quantitative script, the symmetry-breaking standard deviation is $\sigma=0.30$. The dynamics visualization in Fig.~\ref{fig:settling-dynamics} uses $\sigma=0.025$ to make departure from the conditional-mean neighborhood visually explicit. In both cases, noise is applied only at initialization; Settling does not inject noise during iterative refinement.

\subsubsection{Metrics and quantitative results}
We report collision-free success, collision rate, mean squared second-difference smoothness, and mean minimum obstacle clearance. Smoothness is a diagnostic of geometric regularity, not a task reward. Minimum clearance is measured relative to the physical obstacle radius; negative values indicate penetration.

\begin{table}[t]
\centering
\begin{tabular}{lrrrr}
\toprule
Method & Success & Collision & Smooth. & Clearance\\
\midrule
Mean-seeking aggregate & 0\% & 100\% & 0.00000 & -0.235\\
Stochastic denoising & 100\% & 0\% & 0.02631 & 0.045\\
Direct energy descent & 0\% & 100\% & 0.03022 & -0.178\\
\textbf{Settling} & \textbf{99\%} & \textbf{1\%} & \textbf{0.00092} & 0.027\\
\bottomrule
\end{tabular}
\caption{Quantitative comparison on 100 randomized obstacle contexts using the current released implementation. Success is a binary complete-trajectory criterion requiring every waypoint to satisfy the declared clearance rule. Settling succeeds in 99/100 contexts; the single failure misses the $0.02$ clearance threshold.}
\label{tab:quantitative-results}
\end{table}

The mean-seeking aggregate fails in every evaluated context because it remains in the obstacle-intersecting central region. Stochastic denoising succeeds in 100/100 contexts and Settling in 99/100. The single Settling failure occurs at seed 25, where the minimum clearance is $0.01674$, slightly below the declared $0.02$ threshold. Settling nevertheless produces much lower second-difference roughness than the stochastic baseline ($0.00092$ versus $0.02631$) and does not rely on stochastic refinement after initialization. Direct energy descent fails in the evaluated configuration, showing that an energy landscape without proposal-grounded equilibrium formation is not sufficient in this diagnostic.

\subsubsection{Robustness to context perturbation and tie-breaking amplitude}
To test whether the nominal result depends narrowly on the declared obstacle jitter or on a single symmetry-breaking amplitude, we ran a 1,200-run Settling sensitivity sweep: six obstacle-jitter ranges with 100 seeds per point at fixed $\sigma=0.30$, and six initialization perturbation levels with 100 seeds per point at the nominal jitter range $0.08$. The update budget (120 steps), collision criterion, obstacle geometry, and seed construction were held fixed; no parameter was retuned at individual sweep points.

Figure~\ref{fig:robustness-sweeps} reports the resulting success rates. At obstacle jitter $0.00$ and $0.04$, Settling succeeds in 100/100 trials; at $0.08$ and $0.12$, it succeeds in 99/100; at $0.16$, in 97/100; and at $0.20$, in 98/100. The median minimum clearance remains approximately $0.0495$ throughout this sweep, and the fifth-percentile clearance remains above $0.0347$, although a small lower tail crosses the $0.02$ success threshold at the more difficult settings. At the nominal jitter range, initialization perturbations $\sigma=0.05$, $0.10$, and $0.20$ each yield 100/100 success; $\sigma=0.30$ yields 99/100; and $\sigma=0.40$ and $0.50$ each yield 94/100. Thus the mechanism is robust over a broad neighborhood of the nominal setting, but excessively large tie-breaking perturbations measurably reduce reliability. This sensitivity is useful rather than hidden: it identifies initialization amplitude as a genuine operating parameter of the current implementation.

\begin{figure}[!htbp]
\centering
\includegraphics[width=0.96\linewidth]{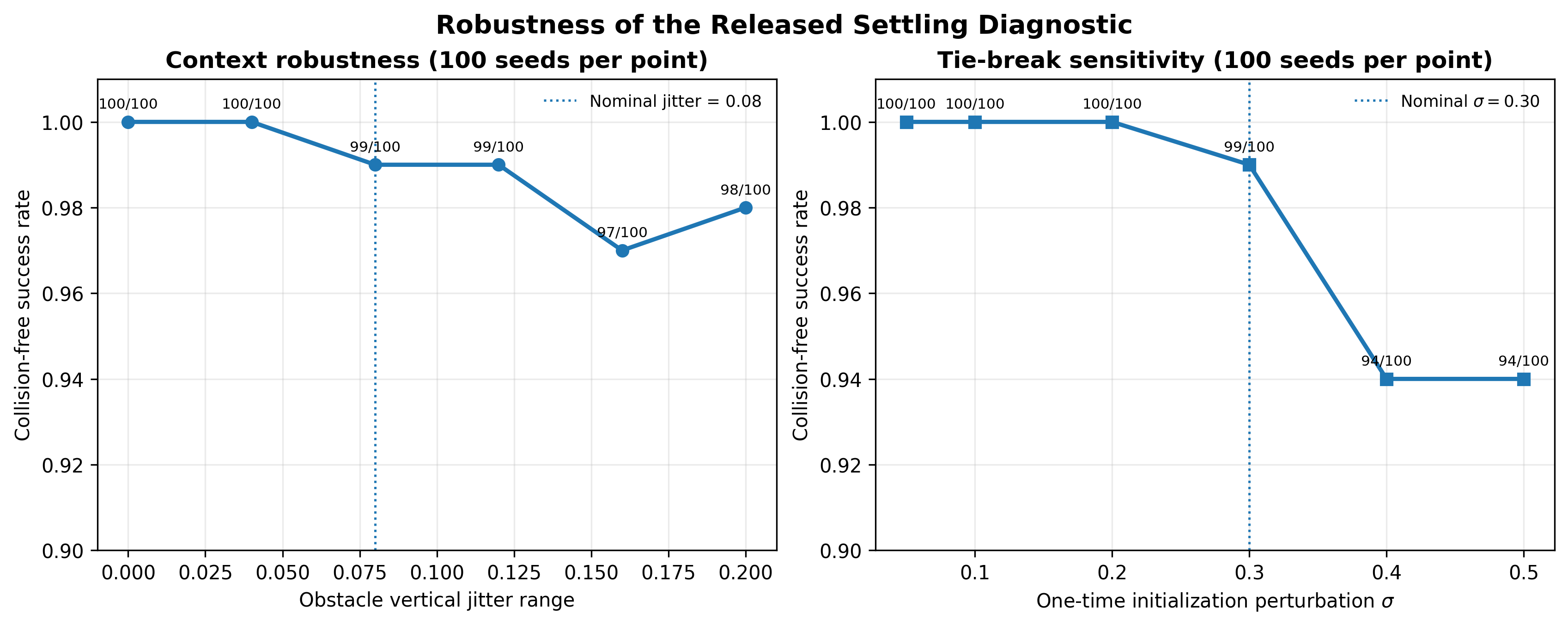}
\caption{Robustness sweeps for the released Settling diagnostic, with 100 seeds per sweep point. Left: collision-free success under increasing obstacle vertical jitter, with the nominal paper setting at $0.08$. Right: sensitivity to the one-time symmetry-breaking perturbation $\sigma$ at fixed jitter $0.08$. The current nominal setting $\sigma=0.30$ reproduces the 99/100 baseline result. Larger perturbations expose a measurable failure regime rather than being tuned away.}
\label{fig:robustness-sweeps}
\end{figure}
\FloatBarrier

\begin{figure}[t]
\centering
\includegraphics[width=0.94\linewidth]{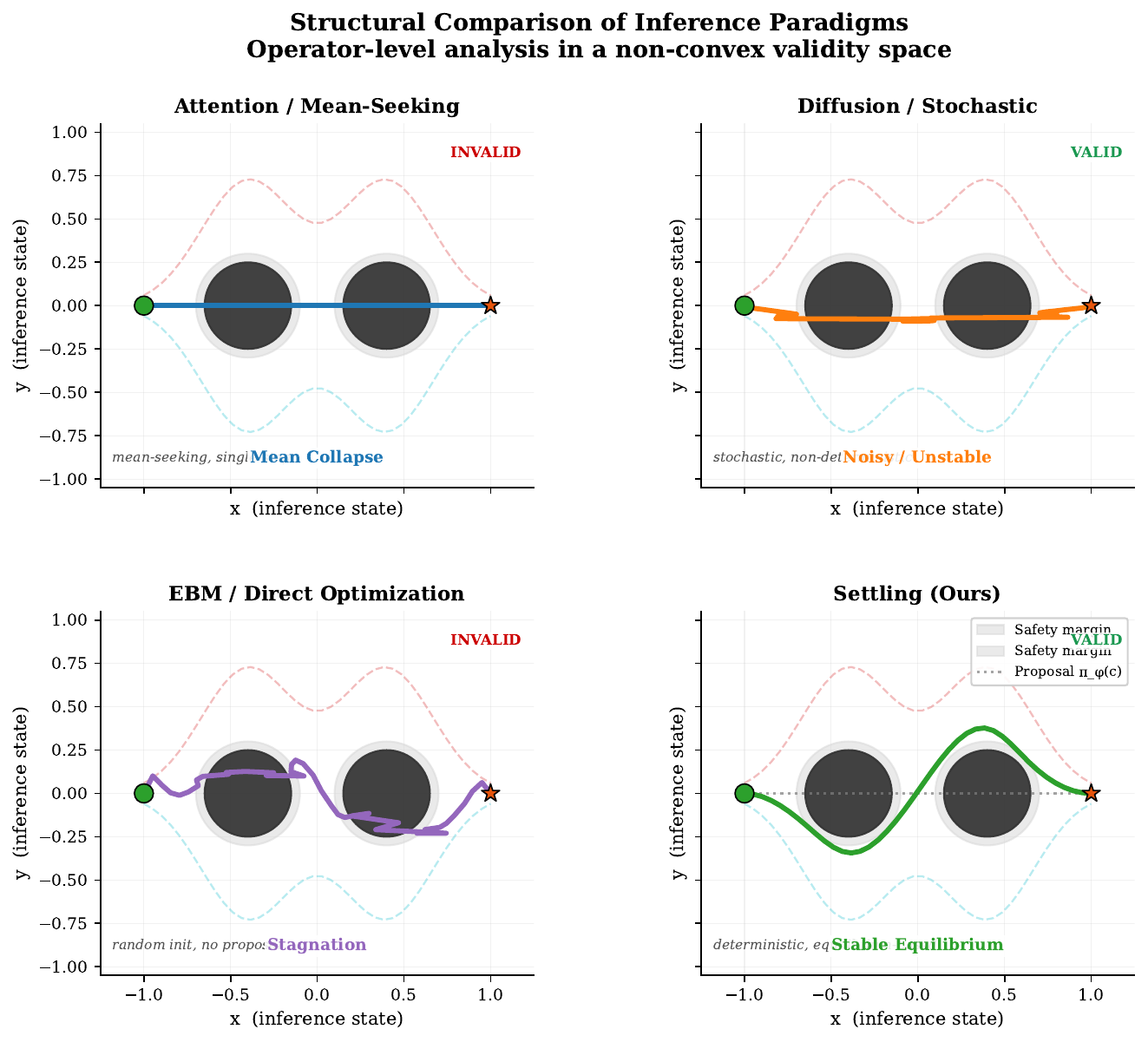}
\caption{Four-operator comparison in the nominal geometric context. The mean-seeking aggregate is invalid; the stochastic baseline reaches a valid but irregular trajectory; direct energy descent remains invalid for the shown initialization; Settling reaches a valid, smooth equilibrium.}
\label{fig:all-methods}
\end{figure}

\subsubsection{Inference-time trajectory}
Figure~\ref{fig:settling-dynamics} shows the internal trajectory during refinement. The initialized state lies near the invalid mean, remains invalid during early iterations, and then moves into a collision-free basin. The supplied GIF shows the same transition continuously.

\begin{figure}[t]
\centering
\includegraphics[width=0.98\linewidth]{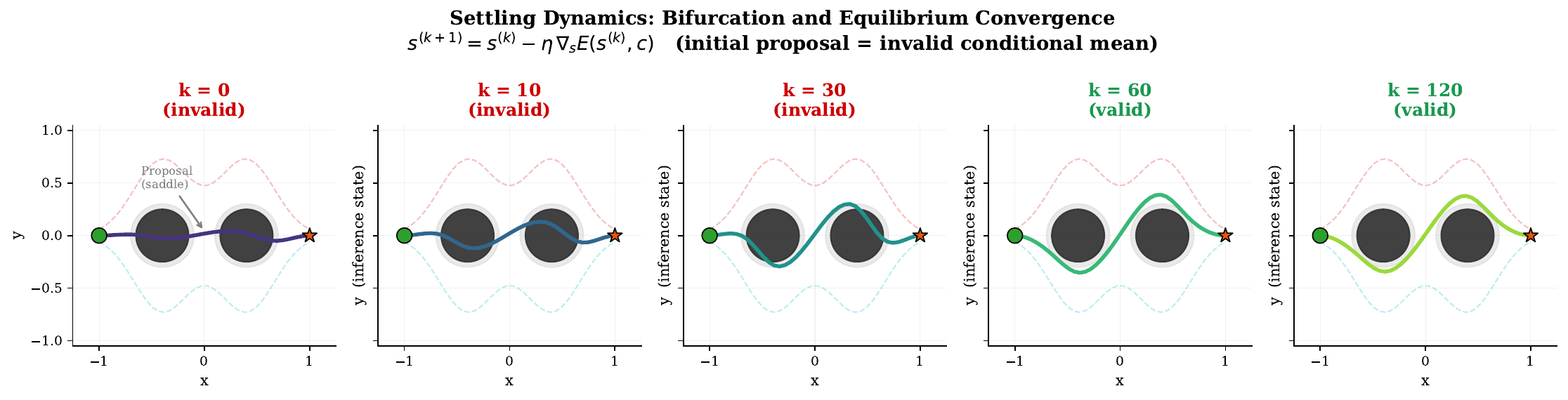}
\caption{Settling dynamics for the nominal context. Snapshots show departure from the invalid mean neighborhood and convergence to a collision-free equilibrium. The visualization uses 120 steps, learning rate $0.006$, and initialization perturbation $\sigma=0.025$.}
\label{fig:settling-dynamics}
\end{figure}

\subsection{Illustration II: Semantic Ambiguity and Nonlinear Symmetry-Breaking Reference}\label{subsec:exp-semantic}
The semantic setting originates from the controlled vision--language example based on CLIP embeddings~\citep{radford2021clip}. Ambiguity is represented by increasing overlap between candidate interpretations. The original manuscript used a candidate set containing a ground-truth caption, hard semantic distractors, and soft distractors, with static similarity-based aggregation contrasted against commitment-driven Settling dynamics. Because the original semantic sweep-generation script and underlying sweep values are not used as quantitative evidence in this revision, the nonlinear-transition panel below is explicitly treated as an analytical reference rather than as a measured CLIP result.

\subsubsection{Analytical nonlinear-transition reference}
Figure~\ref{fig:semantic-phase} is a theory-guided schematic, not a fitted semantic benchmark. The dashed reference decreases continuously with overlap. The Settling reference uses the normalized square-root branch
\begin{equation}
m(\lambda)=m_0+\beta\sqrt{[\lambda-\lambda_c]_+},
\qquad [x]_+=\max(x,0),
\end{equation}
which is the local branch shape of a simple supercritical pitchfork normal form after loss of stability of a symmetric state. Here it is used only to visualize a possible nonlinear branch-selection mechanism. The transition location $\lambda_c\approx0.72$, scale $\beta$, and plotted amplitudes are illustrative; they are not estimated from CLIP data, do not imply thermodynamic criticality, and are not used as empirical evidence.

\begin{figure}[t]
\centering
\includegraphics[width=0.90\linewidth]{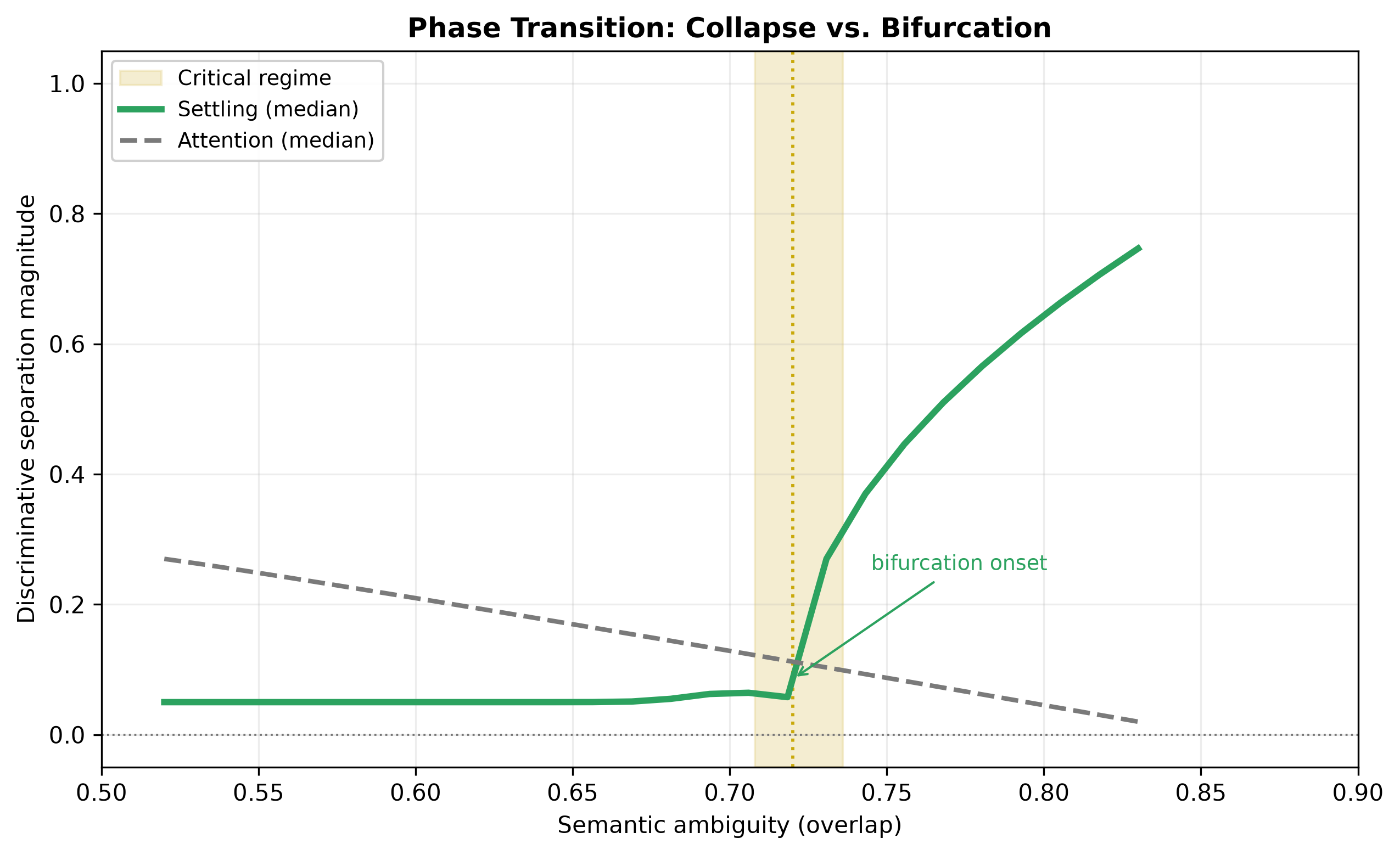}
\caption{Analytical nonlinear-transition reference for semantic ambiguity. The dashed curve represents continuous degradation of a mean-seeking separation measure; the green curve is a normalized square-root reference branch used to illustrate symmetry breaking after an illustrative transition point $\lambda_c$. The curves are not fitted CLIP statistics and no universal criticality claim is made.}
\label{fig:semantic-phase}
\end{figure}

\subsubsection{Instance-level semantic dynamics}
Figure~\ref{fig:semantic-instance} retains the original instance-level semantic illustration. The first stage represents a calibrated but ambiguous state, while the second stage amplifies residual asymmetry until one hypothesis dominates. Its role is structural: it makes the semantic example parallel to the geometric one by treating the ambiguous internal state as revisable rather than terminal. Because the underlying generation script is not part of the released package, this figure is not included in the quantitative reproducibility claims.

\begin{figure}[t]
\centering
\includegraphics[width=0.96\linewidth]{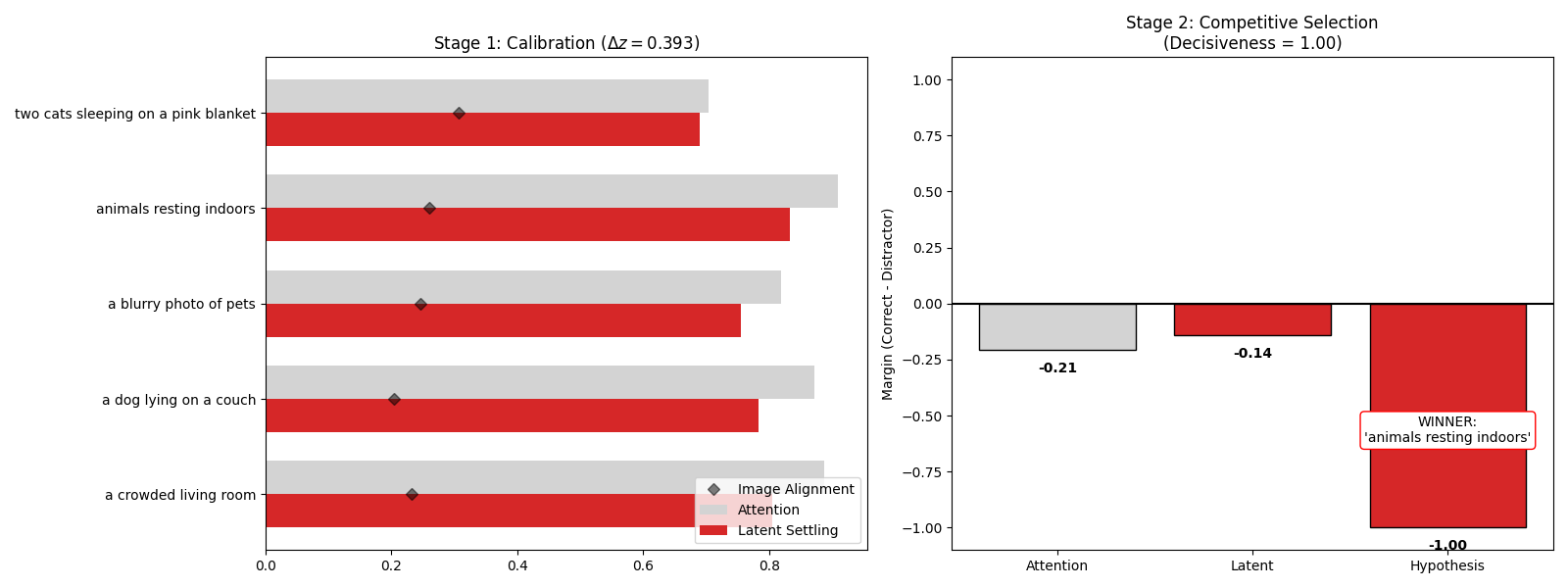}
\caption{Retained two-stage semantic illustration. A calibrated ambiguous state is followed by competitive equilibrium selection that commits to one interpretation. This panel is used as a mechanism illustration rather than as quantitative benchmark evidence.}
\label{fig:semantic-instance}
\end{figure}

\subsubsection{Risk--commitment illustration}
The retained risk--commitment panel in Fig.~\ref{fig:risk-commitment} visualizes the intended effect of varying commitment strength $\alpha$: stronger competition moves the system through operating points with different decisiveness and confidence-error behavior rather than leaving it at a static aggregate. As with the instance-level panel, we treat this figure as a controlled illustration and do not infer a universal calibration law from it.

\begin{figure}[t]
\centering
\includegraphics[width=0.86\linewidth]{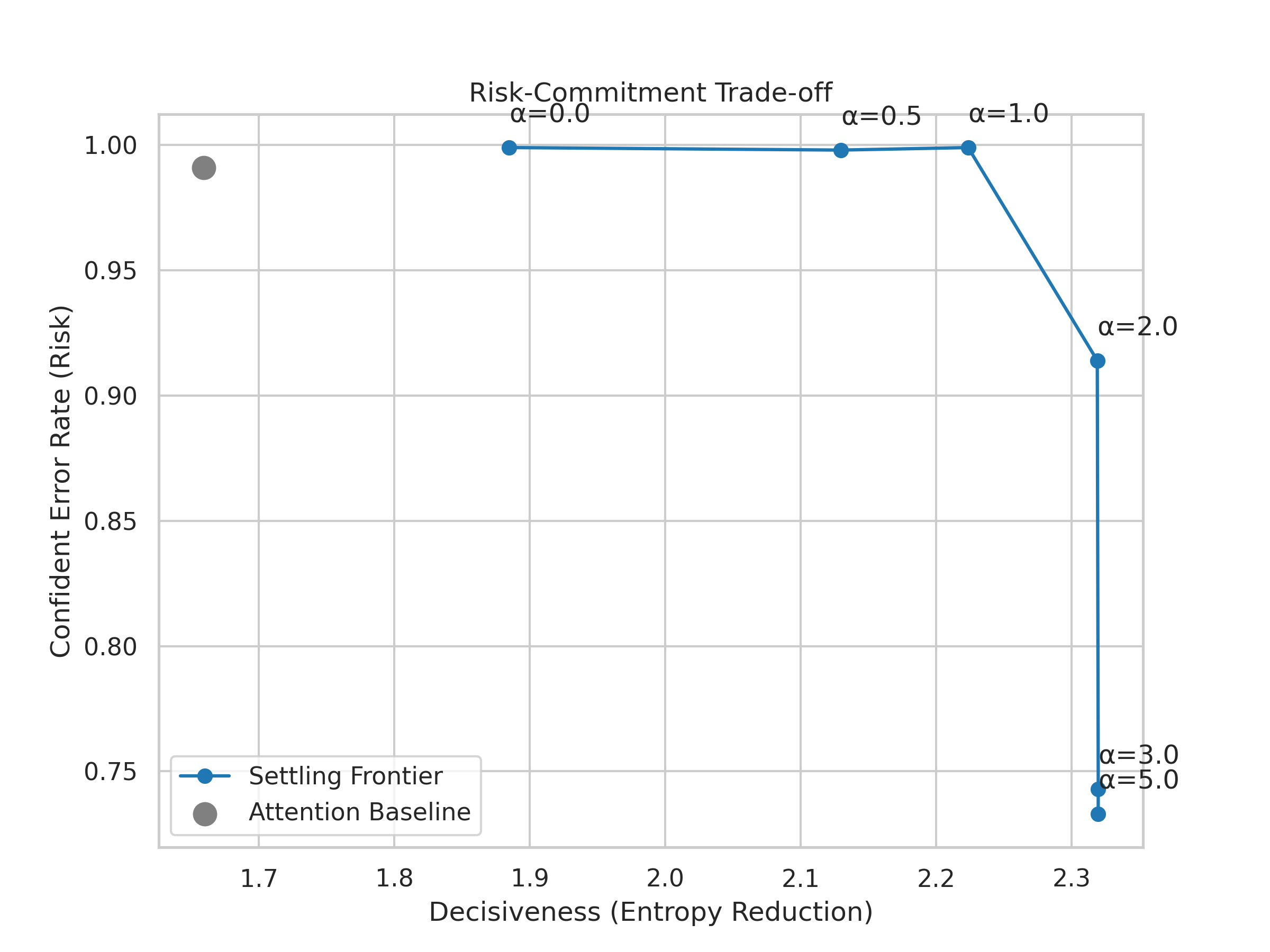}
\caption{Retained risk--commitment illustration. Varying the commitment strength $\alpha$ exposes operating regimes that are unavailable to a static aggregate in this controlled example; the curve is not interpreted as a universal calibration frontier.}
\label{fig:risk-commitment}
\end{figure}

\subsection{Illustration III: Conflicting-Evidence Sensor Fusion}\label{subsec:exp-sensor}
The third study retains the original sensor-fusion example. Two conflicting sensory hypotheses induce a symmetric high-energy state near their average. Standard aggregation terminates at this compromise, whereas Settling treats it as an unstable configuration and evolves toward a structurally consistent basin. Figure~\ref{fig:sensor-fusion} visualizes the one-dimensional energy field and the settling trajectory.

\begin{figure}[t]
\centering
\includegraphics[width=0.84\linewidth]{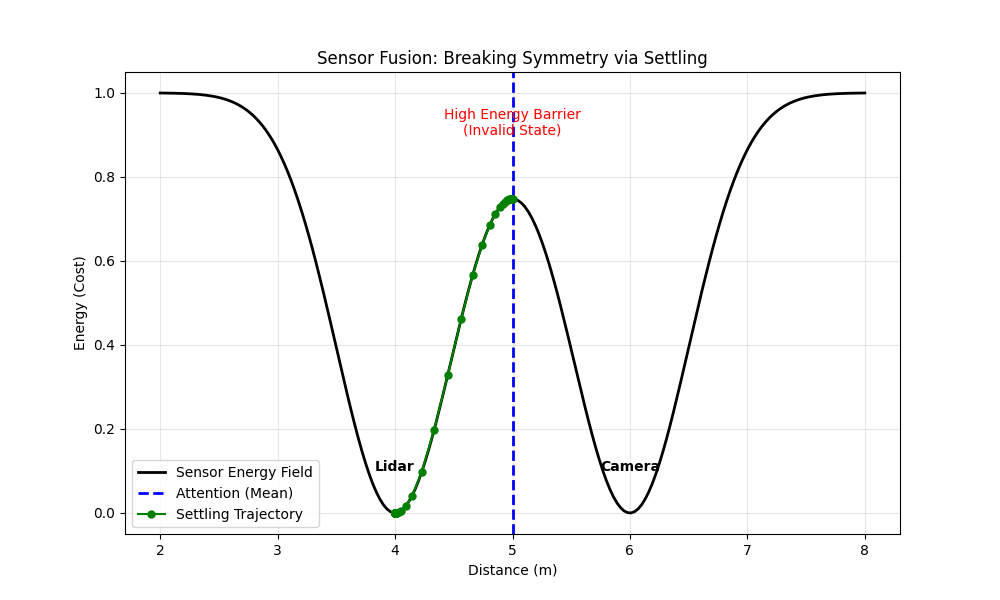}
\caption{Original sensor-fusion demonstration. Conflicting hypotheses create a high-energy compromise near the mean; Settling breaks the symmetry and converges toward one stable sensory-consistent equilibrium.}
\label{fig:sensor-fusion}
\end{figure}

The value of this example is conceptual rather than benchmark-scale: it demonstrates that equilibrium selection can be applied to conflicting evidence without requiring the output to be an arithmetic compromise. It also exposes the same dependence on basin geometry that appears in the theoretical formulation.

\subsection{Evidence IV: Ablation and Numerical Dynamics Diagnostics}\label{subsec:ablation}
Two original ablations isolate mechanisms that are necessary in the controlled symmetric setting.
\begin{enumerate}
\item \textbf{Frozen dynamics.} Disabling Settling reduces the method to a static proposal and restores conditional mean collapse.
\item \textbf{No symmetry-breaking perturbation.} In an exactly symmetric context, initializing exactly at the saddle can leave the system stationary. A deterministic or randomized tie-breaking perturbation is therefore required only for exact symmetry.
\end{enumerate}
These ablations clarify the conditional use of ``deterministic'': once the initialized state is fixed, the Settling updates are deterministic; if a fresh random tie-break is drawn, the complete end-to-end mapping is deterministic only conditional on that draw.

The code package also supplies two additional diagnostics. Figure~\ref{fig:energy-convergence} tracks numerical energy-like quantities for the four operator realizations. It is reported as an implementation diagnostic and not as a proof of Lemma~\ref{lem:lyapunov}, because the practical code includes clipping, endpoint constraints, spline parameterization, and method-specific updates. Figure~\ref{fig:geom-sweep} shows the controlled ambiguity sweep generated by the released geometric code. The plotted Settling commitment is a binary single-run indicator at each ambiguity value, so switches between 0 and 1 reflect seed-sensitive basin selection after the one-time tie break rather than an estimated commitment probability. We therefore use the panel only as a qualitative regime diagnostic and do not infer a universal critical threshold from it.

\begin{figure}[t]
\centering
\includegraphics[width=0.90\linewidth]{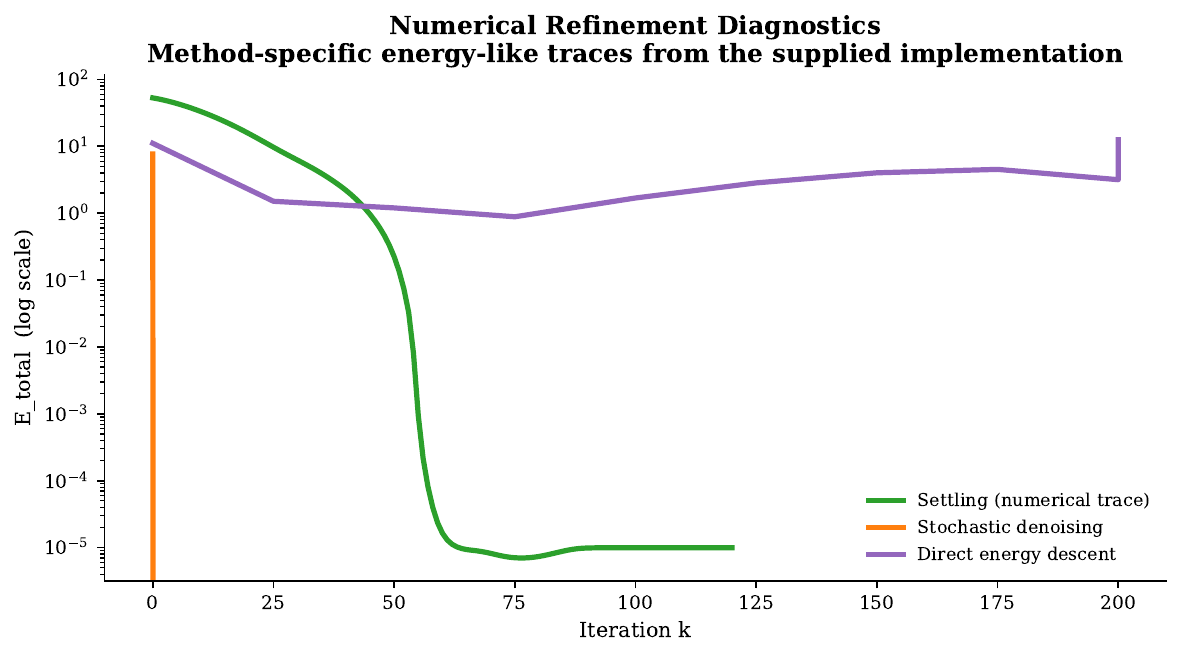}
\caption{Numerical convergence diagnostic from the supplied code. The curves visualize method-specific refinement behavior; they are not used as a formal certificate of the smooth-gradient monotonicity theorem.}
\label{fig:energy-convergence}
\end{figure}

\begin{figure}[t]
\centering
\includegraphics[width=0.88\linewidth]{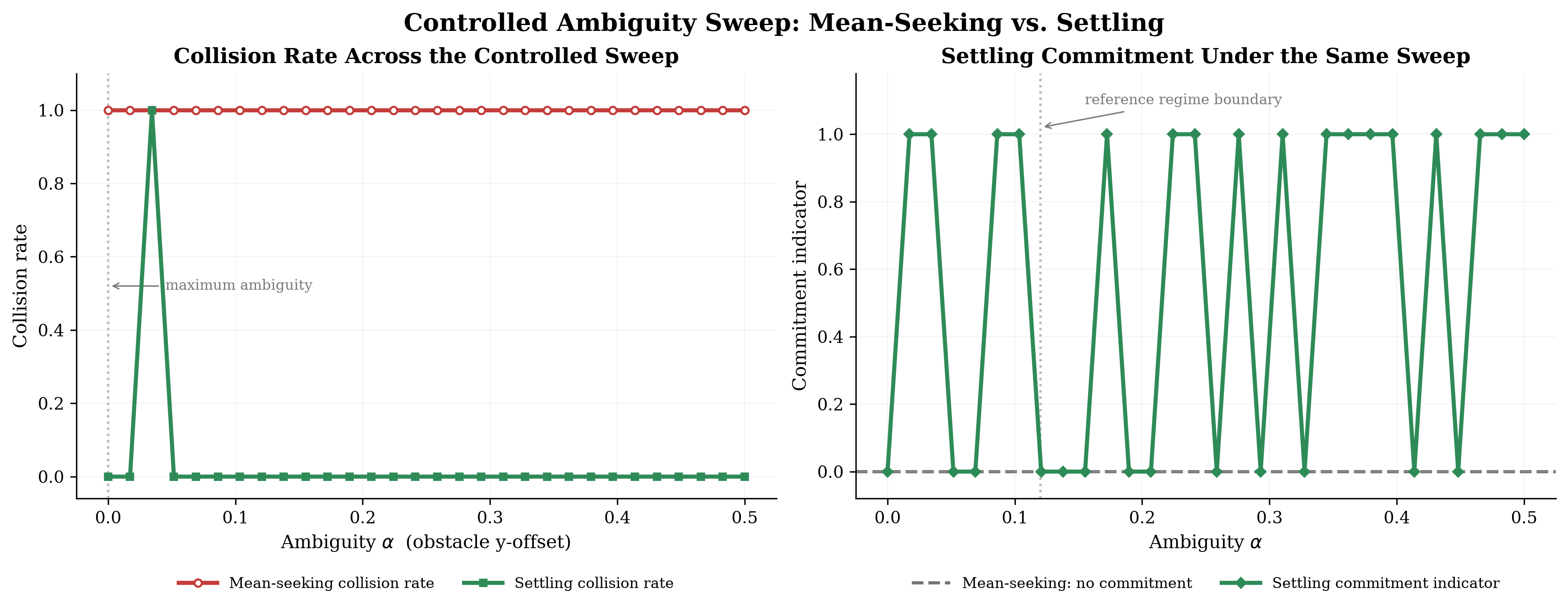}
\caption{Controlled ambiguity sweep generated by the released geometric code. The left panel compares collision outcomes; the right panel reports a binary single-run Settling commitment indicator at each ambiguity value. The panel is a qualitative diagnostic of basin selection, not a probability estimate, and no claim is made that the displayed reference boundary is a universal transition threshold.}
\label{fig:geom-sweep}
\end{figure}

A reduced two-dimensional consistency landscape is provided in
Appendix~\ref{app:additional-geometry} (Fig.~\ref{fig:energy-landscape-app}).
It is included to provide geometric intuition for the local basin structure;
the actual inference state in Evidence~I is a high-dimensional trajectory
represented by multiple control points.

\section{Discussion, Limitations, and Claim Boundaries}\label{sec:limitations}
The evidence supports a common operator-level interpretation: a mean-seeking point estimate can terminate in an invalid compromise when valid alternatives occupy a non-convex set, whereas a proposal-grounded consistency process can treat that compromise as an intermediate state and evolve toward a stable equilibrium. The geometric study is the quantitative anchor because its validity criterion, randomized contexts, baselines, and evaluation code are explicit and reproducible. The semantic and sensor-fusion panels broaden only the conceptual scope; the semantic nonlinear-transition panel is analytical rather than empirical. This separation prevents illustrative cross-domain figures from carrying stronger claims than the released evidence supports.

\paragraph{Transfer to learned high-dimensional systems.}
The present study does not include an end-to-end learned high-dimensional deployment. The inexact-gradient result narrows what must transfer, namely local gradient alignment and stable basin geometry, but it does not establish that a learned critic will satisfy those conditions. Representation error, negative-state construction, spurious equilibria, basin coverage, and inference cost therefore remain empirical questions. This boundary is deliberate: the contribution evaluated here is the operator-level failure mode, equilibrium-selection mechanism, and its robustness under a transparent consistency landscape; learning a useful high-dimensional critic is a separate empirical hypothesis rather than an assumption hidden inside the present claims. A decisive learned follow-up should evaluate at least one real high-dimensional data set, train both proposal and consistency components, compare against mode-preserving as well as mean-seeking baselines, report validity and task quality jointly, and measure inference-time compute and sensitivity to initialization.

Several additional boundaries are essential. First, Evidence~I does not establish superiority over state-of-the-art trajectory planners, diffusion policies, learned EBMs, mixture predictors, or set-valued predictors. The stochastic denoising and direct-descent implementations are stylized operator baselines selected to expose different inference geometries. Illustration~II and Illustration~III are not reproducible benchmark evidence and are not used to support quantitative cross-domain superiority.

Second, determinism is conditional on the initialized state. Exact symmetric saddles require a tie-breaking rule. The released geometric implementation uses a one-time random perturbation, after which all refinement steps are deterministic. The new sensitivity sweep shows that this perturbation is not innocuous: success is 100/100 for $\sigma\leq0.20$ in the tested nominal context distribution, 99/100 at the reported $\sigma=0.30$, and 94/100 at $\sigma=0.40$ and $0.50$. A context-derived fixed perturbation or a validation-selected amplitude could reduce this source of variability, but such a design change is not retrofitted into the present test set.

Third, the monotonicity result in Section~\ref{sec:theory_settling} applies to the unprojected gradient update under an $L$-smooth energy and a step size satisfying the stated bound. Practical implementations may use clipping, endpoint constraints, numerical gradients, and reduced parameterizations. The theory should therefore be read as a property of the idealized operator under its assumptions rather than a blanket certificate for every implementation detail.

Fourth, the quality of the selected equilibrium depends on the consistency landscape. In learned deployments, misspecified energies can create spurious minima or semantically incorrect basins. This dependence is not hidden by the framework; it is the main modeling responsibility shifted from averaging to explicit validity representation.

Finally, Settling introduces inference-time computation. The exact budget is domain dependent rather than a universal 40--100-step rule. In the released geometric evaluation the budgets are stated explicitly for each method; other domains should report their own stopping conditions, iteration counts, and wall-clock costs. Adaptive stopping and learned basin shaping are therefore important directions for future work.

\section{Conclusion}\label{sec:conclusion}
This paper formulates \emph{conditional mean collapse} as a concrete failure of squared-loss point inference in non-convex validity sets and develops Settling as an equilibrium-selection operator that separates proposal generation from explicit consistency evaluation and test-time refinement. The contribution is not gradient descent or fixed-point computation in isolation; it is the inference-level decomposition that treats an invalid aggregate as a revisable internal hypothesis.

In the fully reproducible 100-context geometric diagnostic, the analytical mean-seeking baseline collides in every case, the stylized stochastic denoising baseline succeeds in 100/100 contexts, and Settling succeeds in 99/100 while yielding much lower trajectory roughness and using no stochasticity during refinement after initialization. The 1,200-run robustness study maintains 97--100\% success across obstacle-jitter ranges up to $0.20$ and 94--100\% across initialization perturbations from $0.05$ to $0.50$, while exposing a measurable reliability loss at larger tie-breaking amplitudes. Together with the local-convergence and inexact-gradient results, the study establishes a transparent operator-level proof of concept and identifies concrete conditions that a learned realization must satisfy. The semantic and sensor-fusion panels remain mechanism illustrations rather than quantitative evidence. The next empirical test is therefore sharply defined: determine whether learned proposals and consistency critics preserve useful basin geometry and gradient alignment on realistic data while remaining competitive in task quality and inference cost.

\section*{Data and Code Availability}

The code used for the quantitative geometric study and 1,200-run robustness sweep, the scripts used to generate the reported code-based figures and diagnostics, and the supplementary \texttt{settling\_animation.gif} will be archived in the author's public GitHub repository. The repository URL is: 

\texttt{https://github.com/LyesSaadSaoud/Settling-Equilibrium-Inference}.

\section*{Declaration of Generative AI Use}
Generative AI was used only to refine the English language and improve readability. The author reviewed and approved the final manuscript and takes full responsibility for its content.
\section*{Funding}
This research received no external funding.

\appendix
\section{Proof of Conditional Mean Collapse}\label{app:proofs}
\begin{proof}[Proof of Theorem~\ref{thm:loss_change_not_enough}]
For a fixed context $c$, let
$p(y\mid c)=\lambda\delta(y-y_1)+(1-\lambda)\delta(y-y_2)$.
The conditional squared-loss risk for a point prediction $\hat y$ is
\[
R(\hat y\mid c)=\lambda\|y_1-\hat y\|_2^2+(1-\lambda)\|y_2-\hat y\|_2^2.
\]
Differentiating with respect to $\hat y$ gives
\[
\nabla_{\hat y}R=2\lambda(\hat y-y_1)+2(1-\lambda)(\hat y-y_2).
\]
Setting the gradient to zero yields the unique minimizer
\[
\hat y^*=\lambda y_1+(1-\lambda)y_2=\bar y.
\]
By assumption $\bar y\notin\mathcal V(c)$, so the Bayes-optimal squared-loss point predictor is invalid for that context. Increasing model capacity while preserving the same point-estimation objective cannot alter the Bayes solution. This establishes the stated structural mismatch.
\end{proof}

\section{Consistency Monotonicity}\label{app:monotonicity}
\begin{proof}[Proof of Lemma~\ref{lem:lyapunov}]
For an $L$-smooth function, the descent lemma gives
\[
\mathcal E(s-\eta\nabla\mathcal E(s),c)\leq \mathcal E(s,c)-\eta\left(1-\frac{L\eta}{2}\right)\|\nabla\mathcal E(s,c)\|_2^2.
\]
For $0<\eta<2/L$, the coefficient multiplying the gradient norm is positive, so the energy is non-increasing and decreases strictly whenever the gradient is nonzero. This result is stated for the unprojected idealized update; projected or clipped implementations require the corresponding additional assumptions.
\end{proof}

\section{Inexact-Gradient Consistency Descent}\label{app:inexact-gradient}
\begin{proof}[Proof of Lemma~\ref{lem:inexact-gradient}]
Let $g=\nabla_s\mathcal E(s,c)$ and $\tilde g=g+e$. By $L$-smoothness,
\[
\mathcal E(s-\eta\tilde g,c)
\leq \mathcal E(s,c)-\eta g^\top\tilde g+\frac{L\eta^2}{2}\lVert\tilde g\rVert_2^2.
\]
The relative-error assumption gives
\[
g^\top\tilde g=\lVert g\rVert_2^2+g^\top e
\geq (1-\delta)\lVert g\rVert_2^2,
\]
and
\[
\lVert\tilde g\rVert_2\leq(1+\delta)\lVert g\rVert_2.
\]
Therefore
\[
\mathcal E(s-\eta\tilde g,c)
\leq \mathcal E(s,c)
-\eta\left[(1-\delta)-\frac{L\eta}{2}(1+\delta)^2\right]\lVert g\rVert_2^2.
\]
The bracket is positive when
$0<\eta<2(1-\delta)/(L(1+\delta)^2)$, giving strict decrease whenever $g\neq0$.
\end{proof}

\section{Local Convergence}\label{app:local-convergence}
\begin{proof}[Proof of Theorem~\ref{thm:settling_convergence}]
On the forward-invariant neighborhood $U$, $\mathcal E(\cdot,c)$ is $\mu$-strongly convex and $L$-smooth, so $s^\star$ is the unique minimizer of $\mathcal E$ in $U$. For gradient descent with $0<\eta\leq1/L$, the standard strong-convexity contraction gives
\[
\lVert s^{(k+1)}-s^\star\rVert_2
\leq (1-\eta\mu)\lVert s^{(k)}-s^\star\rVert_2.
\]
Because $U$ is forward invariant, every iterate remains in the region where the assumptions hold. Iterating the inequality yields
\[
\lVert s^{(k)}-s^\star\rVert_2
\leq (1-\eta\mu)^k\lVert s^{(0)}-s^\star\rVert_2\rightarrow0.
\]
Hence the Settling iterates converge linearly to the valid stationary point $s^\star$.
\end{proof}

\section{Reference Inference Operator}\label{app:algorithm}
\begin{algorithm2e}[H]
\caption{Settling inference via equilibrium relaxation}\label{alg:settling}
\KwIn{Context $c$; proposal $\pi_\phi$; consistency energy $\mathcal E$; step size $\eta$; settling budget $K$; optional constraint set $\Omega$; initialization tie-break $\epsilon$}
$s^{(0)}\leftarrow\pi_\phi(c)+\epsilon$\;
\For{$k\leftarrow0$ \KwTo $K-1$}{
$g^{(k)}\leftarrow\nabla_s\mathcal E(s^{(k)},c)$\;
$\tilde s^{(k+1)}\leftarrow s^{(k)}-\eta g^{(k)}$\;
$ s^{(k+1)}\leftarrow\mathrm{Proj}_{\Omega}(\tilde s^{(k+1)})$ if projection is required; otherwise $s^{(k+1)}\leftarrow\tilde s^{(k+1)}$\;
}
\KwOut{$s^{(K)}$}
\end{algorithm2e}

The perturbation $\epsilon$ is zero for asymmetric contexts unless a deterministic or randomized tie-break is needed to escape an exact saddle. If a random tie-break is used, the mapping is deterministic only conditional on that initialized state. The released controlled implementation optimizes a low-dimensional spline representation and hard-pins the start and goal rather than using a general projection operator.

\section{Additional Geometric Visualization}
\label{app:additional-geometry}
Figure~\ref{fig:energy-landscape-app} is the reduced two-dimensional consistency-landscape visualization supplied with the code package. It is included in the appendix for intuition only; the actual inference state in Evidence~I is a trajectory represented by multiple control points.
\begin{figure}[t]
\centering
\includegraphics[width=0.86\linewidth]{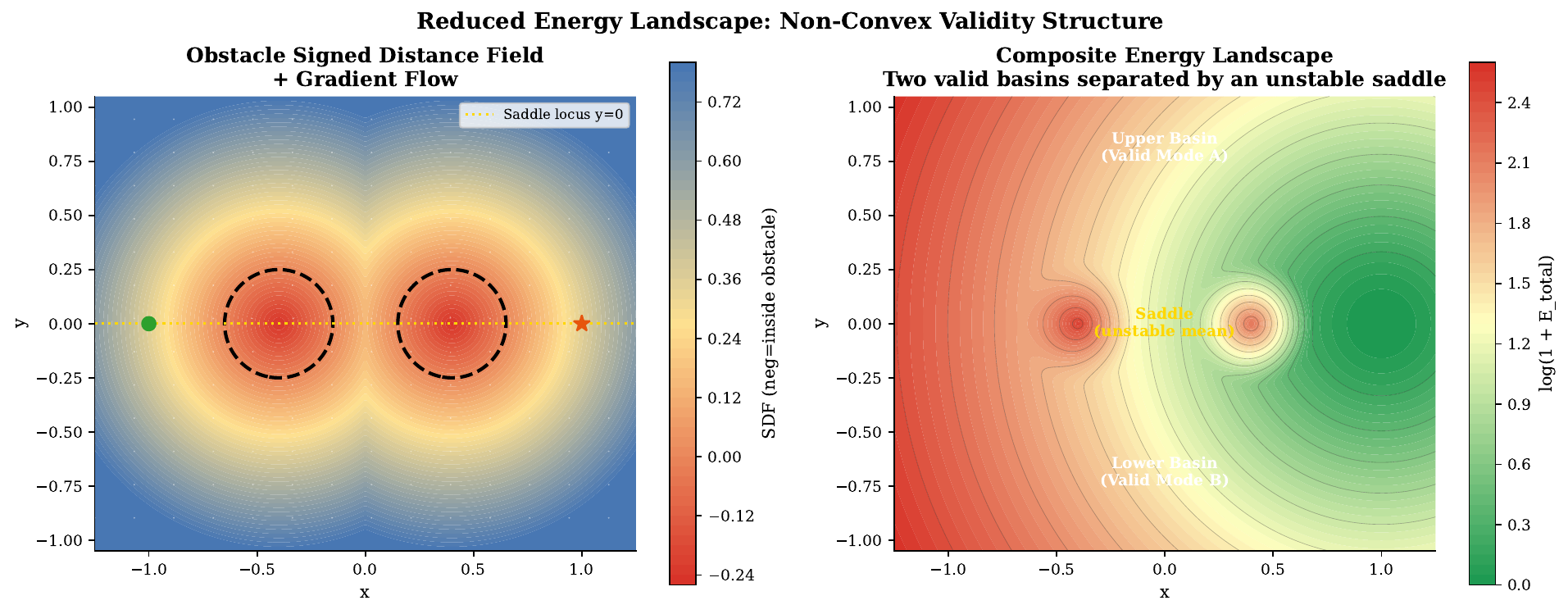}
\caption{Reduced geometric consistency landscape used for intuition. This slice is not the full trajectory-state energy and is not used as quantitative evidence.}
\label{fig:energy-landscape-app}
\end{figure}

\end{document}